\newcommand{\bbm}{\begin{bmatrix}}
\newcommand{\ebm}{\end{bmatrix}}
\newcommand{\mbf}{\mathbf}
\newcommand{\mbs}[1]{{\boldsymbol{#1}}}
\newcommand{\beq}{\begin{equation}}
\newcommand{\eeq}{\end{equation}}
\newcommand{\bdis}{\begin{displaymath}}
\newcommand{\edis}{\end{displaymath}}
\newcommand{\beqn}[1]{\begin{subequations}\label{eq:#1}\begin{eqnarray}}
\newcommand{\eeqn}{\end{eqnarray}\end{subequations}}
\renewcommand{\vec}{\mbox{vec}}
\newcommand{\tr}{\mbox{tr}}
\acrodef{BA}{Bundle Adjustment}
\acrodef{DNN}{Deep Neural Network}
\acrodef{EM}{Expectation Maximization}
\acrodef{GEM}{Generalized Expectation Maximization}
\acrodef{HMM}{Hidden Markov Model}
\acrodef{LDS}{Linear Dynamical System}
\acrodef{LQG}{Linear Quadratic Gaussian}
\acrodef{LQR}{Linear Quadratic Regulator}
\acrodef{LTI}{Linear Time-Invariant}
\acrodef{RTS}{Rauch-Tung-Striebel}
\acrodef{SGD}{Stochastic Gradient Descent}
\acrodef{SLAM}{Simultaneous Localization and Mapping}
\acrodef{RKHS}{Reproducing Kernel Hilbert Space}
\acrodef{SMW}{Sherman-Morrison-Woodbury}
\acrodef{LDU}{Lower Diagonal Upper}
\acrodef{GVI}{Gaussian Variational Inference}
\acrodef{ESGVI}{Exactly Sparse Gaussian Variational Inference}
\acrodef{MAP}{Maximum A Posteriori}
\acrodef{ML}{Maximum Likelihood}
\acrodef{KL}{Kullback-Leibler}
\acrodef{PDF}{Probability Density Function}
\acrodef{NEES}{Normalized Estimation Squared Error}
\acrodef{KF}{Kalman Filter}
\acrodef{VKF}{Variational Kalman Filter}
\acrodef{ISPKF}{Iterated Sigmapoint Kalman Filter}
\acrodef{ESGVI-GN}{ESGVI Gauss-Newton}
\acrodef{ELBO}{Evidence Lower Bound}
\acrodef{NGD}{Natural Gradient Descent}
\acrodef{FIM}{Fisher Information Matrix}
\acrodef{RANSAC}{Random Sample And Consensus}
\acrodef{IRLS}{Iteratively Reweighted Least-Squares}
\acrodef{BRD}{Black-Rangarajan Duality}
\acrodef{VKF}{Variational Kalman Filter}
\acrodef{FLAPOGI}{Fundamental Linear Algebra Problem of Gaussian Inference}
\acrodef{GaBP}{Gaussian Belief Propagation}
\newtheorem{problem}{Problem}
\newtheorem{lemma}{Lemma}
\newtheorem{theorem}{Theorem}
\title{Fundamental Linear Algebra Problem \\ of Gaussian Inference}
\author{
 \normalfont Timothy D. Barfoot \\
 Institute for Aerospace Studies \\
 University of Toronto \\
 \texttt{tim.barfoot@utoronto.ca}
}
\begin{document}

\maketitle
\title{Fundamental Linear Algebra Problem of Gaussian Inference}

\begin{abstract}
Underlying many Bayesian inference techniques that seek to approximate the posterior as a Gaussian distribution is a fundamental linear algebra problem that must be solved for both the mean and key entries of the covariance.  Even when the true posterior is not Gaussian (e.g., in the case of nonlinear measurement functions) we can use variational schemes that repeatedly solve this linear algebra problem at each iteration.  In most cases, the question is not whether a solution to this problem exists, but rather how we can exploit problem-specific structure to find it efficiently.  Our contribution is to clearly state the \ac{FLAPOGI} and to provide a novel presentation (using Kronecker algebra) of the not-so-well-known result of \citet{takahashi73} that makes it possible to solve for key entries of the covariance matrix.  We first provide a global solution and then a local version that can be implemented using local message passing amongst a collection of agents calculating in parallel.  Contrary to belief propagation, our local scheme is guaranteed to converge in both the mean and desired covariance quantities to the global solution even when the underlying factor graph is loopy; in the case of synchronous updates, we provide a  bound on the number of iterations required for convergence.  Compared to belief propagation, this guaranteed convergence comes at the cost of additional storage, calculations, and communication links in the case of loops; however, we show how these can be automatically constructed on the fly using only local information.
\end{abstract}

\section{Introduction}

Gaussian probabilistic inference is an important tool in a number of fields including machine learning, computer vision, and robotics.  The aim is to produce a Gaussian belief, $p(\mbf{x}|\mbf{z})$, of the state of the world, $\mbf{x}$, given a prior model, $p(\mbf{x})$, and some evidence, $\mbf{z}$, and hence we often take a Bayesian perspective. Even if the full Bayesian posterior is not Gaussian (e.g., in the case of nonlinear models), we might choose to find the best Gaussian approximation thereof.  

As discussed by \citet{opper09} and more recently by \citet{barfoot_ijrr20} we can start by considering that we want to find a Gaussian approximation, $q(\mbf{x})$, that minimizes the \ac{KL} divergence \citep{kullback51} from the true Bayesian posterior, $p(\mbf{x} | \mbf{z})$:
\begin{equation}
\mbox{KL}(q||p) = - \int^{\mbs{\infty}}_{-\mbs{\infty}} q(\mbf{x}) \ln \left( \frac{p(\mbf{x} | \mbf{z})}{q(\mbf{x})} \right) \, d\mbf{x} .
\end{equation}
Noting that we can factor $p(\mbf{x},\mbf{z}) = p(\mbf{x}|\mbf{z})p(\mbf{z})$ we can equivalently find the $q(\mbf{x})$ that minimizes
\begin{equation}
\label{eq:functional}
V(q) = \mathbb{E}_q[ \phi(\mbf{x})] + \frac{1}{2} \ln \left( |\mbs{\Sigma}^{-1}| \right),
\end{equation}
with $\phi(\mbf{x}) = - \ln p(\mbf{x},\mbf{z})$ and the second term being the well-known expression for the {\em entropy} of a Gaussian.  This functional, $V(q)$, is sometimes referred to as the (negative) {\em Helmholtz free energy} in statistical physics \citep[\S 11.1, p.385] {koller09} or the (negative) \ac{ELBO} in machine learning.  In general, this approach is referred to as {\em variational inference} or {\em variational Bayes} \citep{jordan99,bishop06}.  As we will restrict ourselves to Gaussian approximations of the posterior, we will refer to the approach as \ac{GVI}.

Our Gaussian approximation will take the standard multivariate form,
\begin{equation}
q(\mbf{x}) = \mathcal{N}(\mbs{\mu}, \mbs{\Sigma}) = \frac{1}{\sqrt{(2\pi)^N |\mbs{\Sigma}|}} \exp\left( -\frac{1}{2} (\mbf{x} - \mbs{\mu})^T \mbs{\Sigma}^{-1} (\mbf{x} - \mbs{\mu}) \right), 
\end{equation}
where $|\cdot|$ is the determinant, $\mbs{\mu}$ is the mean, and $\mbs{\Sigma}$ is the covariance.  For many practical robotics and computer vision applications, the dimension of the state, $N$, can become very large and so we are very much concerned with finding efficient solutions that can exploit any structure in our problem.  

Problem-specific structure derives from sparsity in the underlying graphical model of the system.  If we consider that the joint likelihood of the state and data factors, we can write its negative log-likelihood as
\begin{equation}
\phi(\mbf{x}) =  \sum_{k=1}^K \phi_k( \mbf{x}_k ), 
\end{equation}
where $\phi_k(\mbf{x}_k) = - \ln p(\mbf{x}_k,\mbf{z}_k)$ is the $k$th (negative log) factor expression, $\mbf{x}_k$ is a {\em subset} of variables in $\mbf{x}$ associated with the $k$th factor, and $\mbf{z}_k$ is a subset of the data in $\mbf{z}$ associated with the $k$th factor.  Substituting this into~\eqref{eq:functional} we have
\begin{equation}
V(q) = \sum_{k=1}^K \mathbb{E}_{q_k}[ \phi_k(\mbf{x}_k)] + \frac{1}{2} \ln \left( |\mbs{\Sigma}^{-1}| \right),
\end{equation}
where critically the expectation for the $k$th factor reduces to being over $q_k(\mbf{x}_k)$, the {\em marginal} associated with the variables involved in that factor \citep{barfoot_ijrr20}. 

\citet{barfoot_ijrr20} go on to show that the following Newton-like iterative scheme can be used to seek the minimum of $V(q)$ in terms of $\mbs{\mu}$ and $\mbs{\Sigma}^{-1}$:
\begin{subequations}\label{eq:iterstein}
\begin{eqnarray}
\left(\mbs{\Sigma}^{-1}\right)^{(i+1)} & = &  \underbrace{\sum_{k=1}^K \mathbb{E}_{q_k^{(i)}}\left[ \frac{\partial^2}{\partial \mbf{x}_k^T \partial \mbf{x}_k} \phi_k(\mbf{x}_k)\right]}_{\mbf{A}}, \label{eq:iterstein1} \\
\left(\mbs{\Sigma}^{-1}\right)^{(i+1)} \, \delta\mbs{\mu} & = & \underbrace{- \sum_{k=1}^K \mathbb{E}_{q_k^{(i)}}\left[ \frac{\partial}{\partial \mbf{x}_k^T} \phi_k(\mbf{x}_k)\right]}_{\mbf{b}},  \label{eq:iterstein2}   \\
\mbs{\mu}^{(i+1)} & = & \mbs{\mu}^{(i)} + \delta\mbs{\mu},  \label{eq:iterstein3}
\end{eqnarray}
\end{subequations}
where $(i)$ is the iteration index.  This scheme can be shown to be carrying out \ac{NGD} \citep{amari98,Amari2016}, which approximates the Hessian using the \ac{FIM} \citep{barfoot_arxiv20,barfoot_ijrr20,barfoot_amai20}.

The scheme thus proceeds in two alternating phases:
\begin{description}
\item[build the linear system:] We use the current marginals, $q_k(\mbf{x}_k)$, which are also Gaussian, to assemble the two matrices, $\mbf{A}$ and $\mbf{b}$; here we can use Gaussian quadrature to compute the required expectations or even approximate further by evaluating only at the mean, which results in standard \ac{MAP} estimation
\item[solve the linear system:] We solve $\mbf{A} \, \delta\mbs{\mu} = \mbf{b}$ for $\delta\mbs{\mu}$ and then update $\mbs{\mu}$; we must also extract the new marginals, which requires calculating the entries of $\mbf{A}^{-1}$ (the covariance matrix) corresponding to the nonzero entries of $\mbf{A}$ (the inverse covariance matrix)
\end{description}
It is the second phase, `solve the linear system', with which we are concerned in this paper.  The inverse covariance matrix, $\mbf{A}$, is typically quite sparse for key problems such as trajectory estimation, bundle adjustment \citep{brown58}, simultaneous localization and mapping \citep{durrantwhyte06a}, calibration, and pose-graph optimization.  However, the covariance matrix, $\mbf{A}^{-1}$ is usually dense.  \citet{barfoot_ijrr20} further discuss how in computing the marginals for the next `build the linear system' phase we need only compute the entries of $\mbf{A}^{-1}$ associated with the nonzero entries of $\mbf{A}$, although this can also be seen plainly in~\eqref{eq:iterstein}.

The rest of this paper is organized as follows.  Section~\ref{sec:problemsetup} provides the specific linear algebra problem with which we are concerned in this paper.  Section~\ref{sec:globalsolution} uses Kronecker algebra along with so-called elimination and duplication matrices to provide a novel analysis and presentation of a global solution to the problem based on the method of \citet{takahashi73}.  Section~\ref{sec:localsolution} provides a novel method to carry out the global solution in a purely local manner using message passing amongst a collection of agents computing in parallel; this method is guaranteed to converge to the unique global solution in both the mean and desired covariance quantities.  Section~\ref{sec:relatedwork} discusses related work in the literature; we chose to provide related work after the main technical results of the paper so as to allow several detailed connections to be made.  Section~\ref{sec:conclusion} wraps things up and discusses possibilities for future work.

\section{Problem Setup}
\label{sec:problemsetup}

Motivated by the discussion in the introduction, we can begin by stating the problem with which we are concerned in this paper:

\begin{problem} \label{prob:flapogi}
{\em (\acf{FLAPOGI})} Consider the linear system of equations, $\mbf{A} \mbf{x} = \mbf{b}$, where $\mbf{A}$ is $N \times N$, real, symmetric, positive definite, and having a known sparsity pattern; solve for (i) the unique $\mbf{x}$ and (ii) the entries of $\mbf{A}^{-1}$ corresponding to the nonzero entries of $\mbf{A}$.
\end{problem}

Disregarding any sparsity enjoyed by $\mbf{A}$, there is an obvious solution, which is to compute $\mbf{A}^{-1}$ and then (i) take $\mbf{x} = \mbf{A}^{-1} \mbf{b}$ and (ii) extract the desired entries of $\mbf{A}^{-1}$.  This will always be possible given that $\mbf{A} > 0$.  However, when the size of the linear system $N$ is very large the complexity of this brute-force approach is $O(N^3)$.

We can also convert the second part of Problem~\ref{prob:flapogi} into a linear system of equations using the vectorization, $\vec(\cdot)$, and Kronecker product, $\otimes$, operations detailed in Appendix~\ref{app:kronvec}.  We first note that
\begin{equation}
\mbf{A} \mbf{A}^{-1} = \mbf{1},
\end{equation}
where $\mbf{1}$ is the identity matrix.  Then we have
\begin{equation}\label{eq:vecAAinv}
( \mbf{1} \otimes \mbf{A} ) \vec(\mbf{A}^{-1}) = \vec(\mbf{1}),
\end{equation}
where we have used some basic identities found in Appendix~\ref{app:kronvec}.  Since $\mbf{A}^{-1}$ is symmetric (because $\mbf{A}$ is symmetric by assumption), we can remove the redundant entries by defining an {\em elimination} matrix, $\mbf{E}$, to pick the upper-half entries but then reconstitute them using a {\em duplication} matrix, $\mbf{D}$:
\begin{equation}
\mbf{D} \underbrace{\mbf{E} \vec(\mbf{A}^{-1})}_{\mbf{y}} = \vec(\mbf{A}^{-1}),
\end{equation}
which holds for any symmetric matrix.  See Appendix~\ref{app:elimdup} for more on elimination and duplication matrices and their properties.
Inserting this to~\eqref{eq:vecAAinv} and then premultiplying both sides of our equation by $\mbf{E}$ we have
\begin{equation}
\mbf{E} ( \mbf{1} \otimes \mbf{A} )\mbf{D} \, \mbf{y} = \mbf{E} \vec(\mbf{1}).
\end{equation}
Thus, we can solve this system for the unique entries of $\mbf{A}^{-1}$ stored in $\mbf{y}$.  Unfortunately, the coefficient matrix $\mbf{E} ( \mbf{1} \otimes \mbf{A} )\mbf{D}$ does not have any obvious exploitable structure when expressed in this form.  The next section will discuss how to solve
\beqn{flapogi}
\mbf{A} \mbf{x} & = & \mbf{b}, \label{eq:flapogi1} \\
\mbf{E} ( \mbf{1} \otimes \mbf{A} )\mbf{D} \, \mbf{y} & = & \mbf{E} \vec(\mbf{1}), \label{eq:flapogi2}
\eeqn
efficiently using a triangular factorization of $\mbf{A}$.  We will refer to~\eqref{eq:flapogi1} as the {\em primary problem} and~\eqref{eq:flapogi2} as the {\em secondary problem}.  We will see that the order in which place our variables in $\mbf{x}$ and $\mbf{y}$ is important; to avoid confusion, we will refer to the order of $\mbf{x}$ as the {\em primary variable order} and the order of $\mbf{y}$ as the {\em secondary variable order}.

\section{Global Solution}
\label{sec:globalsolution}

We use this section to provide a novel presentation of the result of \citet{takahashi73} (see also \citet{erisman75}) that shows that we do not need to calculate all the entries of $\mbf{A}^{-1}$ to obtain those corresponding to the nonzero entries of $\mbf{A}$.  We will make use of Kronecker algebra and {\em elimination} and {\em duplication} matrices to handle symmetric and triangular matrices \citep{magnus19}.

\subsection{Primary Problem}

We begin by performing a triangular decomposition on $\mbf{A}$,
\begin{equation}
\mbf{A} = \mbf{L}\mbf{S}\mbf{L}^T,
\end{equation}
where $\mbf{L}$ is lower-triangular with ones on its diagonal (and hopefully sparse) and $\mbf{S}$ is diagonal.  This will always be possible given our assumptions that $\mbf{A}$ is symmetric and positive definite.  If $\mbf{A}$ is sparse, so too will be $\mbf{L}$ but possibly with some fill-in.  The amount of fill-in depends on the nature of the underlying graphical model (tree vs. loopy; more on this below) as well as the primary variable order selected for $\mbf{x}$; minimizing the fill-in by reordering $\mbf{x}$ is known to be an NP-hard problem \citep{yannakakis81} and we do not address this here.

To solve the first equation of~\eqref{eq:flapogi}, we can then solve the following two linear systems,
\beqn{}
\mbf{L} \mbf{S} \, \mbf{w} & = & \mbf{b}, \\
\mbf{L}^T \mbf{x} & = & \mbf{w},
\eeqn
where the first can be solved for $\mbf{w}$ through (sparse) forward substitution and the second solved for $\mbf{x}$ through (sparse) backward substitution.  The next section discusses how the triangular decomposition can be reused to solve the second equation of~\eqref{eq:flapogi} for the desired entries of the inverse of $\mbf{A}$.  

\begin{figure}[t]
\includegraphics[width=0.9\textwidth]{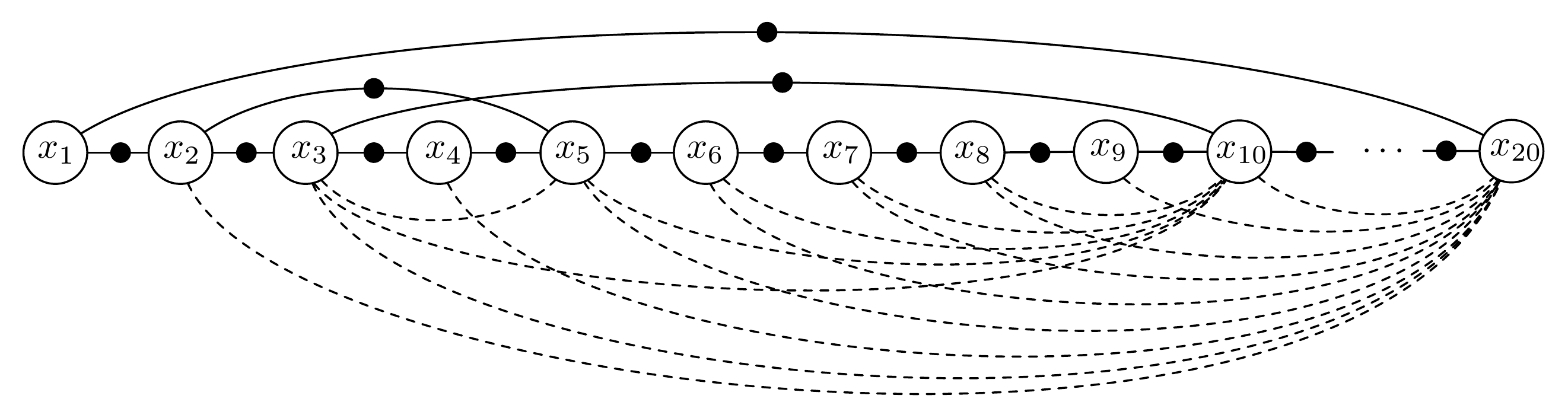}
\centering
\caption{Example loopy factor graph showing $20$ variables (hollow circles) with factors (black filled circles) connecting them (solid lines).   Due to the chosen variable order, when we perform a triangular decomposition there are extra connections that arise (dashed lines) due to the fill-in arising in the triangular factors.  Different variable orders result in different fill-in patterns.}
\label{fig:example1}
\end{figure}

\subsection{Secondary Problem}

To solve the second equation of~\eqref{eq:flapogi}, we can insert the triangular factorization for $\mbf{A}$:
\begin{equation}
\mbf{E} ( \mbf{1} \otimes \mbf{L}\mbf{S}\mbf{L}^T )\mbf{D} \, \mbf{y} = \mbf{E} \vec(\mbf{1}),
\end{equation}
which then factors as
\begin{equation}
\mbf{E} ( \mbf{1} \otimes \mbf{L}\mbf{S} )( \mbf{1} \otimes \mbf{L}^T )\mbf{D} \, \mbf{y} = \mbf{E} \vec(\mbf{1}),
\end{equation}
where we note that $\mbf{L}\mbf{S}$ is lower triangular.  Using one of the identities in Appendix~\ref{app:elimdup} we can rewrite this as
\begin{equation}
\mbf{E} ( \mbf{1} \otimes \mbf{L}\mbf{S} ) \mbf{D} \mbf{E} ( \mbf{1} \otimes \mbf{L}^T )\mbf{D} \, \mbf{y} = \mbf{E} \vec(\mbf{1}),
\end{equation}
and then
\begin{multline}
\mbf{E} ( \mbf{1} \otimes \mbf{L}^T )\mbf{D} \, \mbf{y} = \left( \mbf{E} ( \mbf{1} \otimes \mbf{L}\mbf{S} ) \mbf{D} \right)^{-1} \mbf{E} \vec(\mbf{1})  = \mbf{E} ( \mbf{1} \otimes \mbf{S}^{-1}\mbf{L}^{-1} ) \mbf{D} \mbf{E} \vec(\mbf{1}) \\ = \mbf{E} ( \mbf{1} \otimes \mbf{S}^{-1}\mbf{L}^{-1} ) \vec(\mbf{1}) = \mbf{E} \vec( \mbf{S}^{-1} \mbf{L}^{-1} ) = \mbf{E} \vec( \mbf{S}^{-1} ),
\end{multline}
where we again have used several identities in Appendix~\ref{app:elimdup}.
The key insight of \citet{takahashi73}, that is further explained by \citet{erisman75}, is the last step where we safely remove the $\mbf{L}^{-1}$ from the right-hand side since it is lower-triangular and our elimination matrix is keeping only the upper-half entries of a vectorized matrix.  We can now safely solve
\begin{equation}
\underbrace{\mbf{E} \left( \mbf{1} \otimes \mbf{L}^T \right) \mbf{D}}_{\mbf{C}} \; \mbf{y} = \underbrace{\mbf{E} \vec(\mbf{S}^{-1})}_{\mbf{d}},
\end{equation}
for $\mbf{y}$, which has dimension $M = \frac{1}{2} N(N+1)$.  For some variable order choices, the $M \times M$ coefficient matrix, $\mbf{C}$, is upper triangular (more on this below), so we could simply backward substitute to solve for all the entries of $\mbf{y}$ and hence $\mbf{A}^{-1}$.  However, we would like to avoid this as in many cases we do not need all of the entries.

We can work out a useful expression for the entries of our coefficient matrix, $\mbf{C}$.  The $ij,k\ell$ entry is given by
\begin{equation}
C_{ij,k\ell} = \vec(\mbf{1}_{ji})^T \left( \mbf{1} \otimes \mbf{L}^T \right) \vec( (1 - \delta_{k\ell}) \mbf{1}_{k\ell} + \mbf{1}_{\ell k} ),
\end{equation}
where we use the detailed definitions of $\mbf{E}$ and $\mbf{D}$ from the Appendix; $\mbf{1}_{ab}$ is an $N \times N$ matrix of zeros except for the $ab$ entry, which is one.  
We can write
\begin{equation}\label{eq:L}
\mbf{L} = \sum_{m,n} L_{mn} \mbf{1}_{mn}.
\end{equation}
Inserting this we can write
\begin{equation}
C_{ij,k\ell} = \sum_{m,n} L_{mn} \vec(\mbf{1}_{ji})^T \left( \mbf{1} \otimes \mbf{1}_{mn}^T \right) \vec( (1 - \delta_{k\ell}) \mbf{1}_{k\ell} + \mbf{1}_{\ell k} ),
\end{equation}
and then we can apply basic properties of $\vec(\cdot)$ and $\otimes$ to rewrite this as
\begin{equation}
C_{ij,k\ell} = \sum_{m,n} L_{mn} \mbox{tr} \left( \mbf{1}_{ij} \mbf{1}_{nm} \left( (1 - \delta_{k\ell}) \mbf{1}_{k\ell} + \mbf{1}_{\ell k} \right)  \right).
\end{equation}
We can now use the fact that 
\begin{equation} \label{eq:1ij1nm}
\mbf{1}_{ab} \mbf{1}_{cd} = \left\{ \begin{tabular}{cl} $\mbf{1}_{ad}$ & \mbox{if $b=c$} \\ $\mbf{0}$ & \mbox{otherwise} \end{tabular} \right. ,
\end{equation}
to rewrite our expression as
\begin{equation}
C_{ij,k\ell} = \sum_{m,j} L_{mj} \mbox{tr} \left( \mbf{1}_{im} \left( (1 - \delta_{k\ell}) \mbf{1}_{k\ell} + \mbf{1}_{\ell k} \right)  \right).
\end{equation}
Applying~\eqref{eq:1ij1nm} again we have
\begin{equation}
C_{ij,k\ell} =   L_{kj} \mbox{tr} (1 - \delta_{k\ell}) \left( \mbf{1}_{i\ell} \right) + L_{\ell j} \mbox{tr}\left( \mbf{1}_{ik} \right),
\end{equation}
or
\begin{equation}
C_{ij,k\ell} =   L_{kj} (1 - \delta_{k\ell})\delta_{i\ell} + L_{\ell j} \delta_{ik},
\end{equation}
where $\mbox{tr} \left( \mbf{1}_{ab} \right)  = \delta_{ab}$.  Working through the combinations of the various Kronecker delta functions we finally have
\begin{equation}\label{eq:Cijkl}
C_{ij,k\ell} = \left\{\begin{tabular}{ll}  
$L_{\ell j}$ & \mbox{if $i = k$} \\
$L_{k j}$ & \mbox{if $i = \ell$} \\
$0$ & \mbox{otherwise}
\end{tabular}\right. .
\end{equation}
We will use this expression to show various properties of $\mbf{C}$ in the next section.  

We can also work out an expression for the entries of $\mbf{d}$.  The $ij$ entry is given by
\begin{equation}
d_{ij} = \vec(\mbf{1}_{ji})^T \vec(\mbf{S}^{-1}) = \mbox{tr}( \mbf{1}_{ij} \mbf{S}^{-1} ).
\end{equation}
Inserting $\mbf{S}^{-1} = \sum_{kk} S_{kk}^{-1} \mbf{1}_{kk}$ we have
\begin{equation}\label{eq:dij}
d_{ij} = \vec(\mbf{1}_{ji})^T \vec(\mbf{S}^{-1}) = \sum_{kk} S_{kk}^{-1} \mbox{tr}( \mbf{1}_{ij} \mbf{1}_{kk} ) = S_{jj}^{-1} \mbox{tr}( \mbf{1}_{ij}) = S_{jj}^{-1} \delta_{ij},
\end{equation}
which makes sense since $\mbf{S}$ is diagonal.

\subsection{How to Solve Only for Desired Entries of $\mbf{A}^{-1}$}

If $\mbf{A}$ is quite sparse then so too will be $\mbf{L}$ and hence many of its entries, $L_{ij}$, will be zero; we assume we know the sparsity pattern of $\mbf{L}$ as a by-product of the triangular decomposition.  We will refer to the set of indices of $\mbf{L}$ corresponding to nonzero entries as $\mathcal{L}$, while $\bar{\mathcal{L}}$ will be the complementary set of indices in the lower triangle (corresponding to zero entries of the lower triangle of $\mbf{L}$).

Owing to the way the triangular decomposition works, we have the following lemma:
\begin{lemma}\label{lem:box}
{\em (Four Corners of a Box)} If ${ij}, {kj} \in \mathcal{L}$ with $i < k$ then ${ki} \in \mathcal{L}$.  
\end{lemma}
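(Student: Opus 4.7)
The plan is to exploit the column-by-column structure of the $\mbf{L}\mbf{S}\mbf{L}^T$ factorization, showing that if both $L_{ij}$ and $L_{kj}$ are nonzero then the symmetric rank-one update associated with eliminating column $j$ injects a nonzero contribution into position $(k,i)$ of the working matrix, and that this contribution is then inherited by $L_{ki}$.

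First, since $\mbf{L}$ is lower triangular, $ij\in\mathcal{L}$ and $kj\in\mathcal{L}$ force $j\le i$ and $j\le k$. If $j=i$ then the desired conclusion $ki\in\mathcal{L}$ coincides with the hypothesis $kj\in\mathcal{L}$, so we may assume $j<i<k$. Now recall that the factorization produces the columns of $\mbf{L}$ sequentially: setting $\mbf{A}^{(1)}=\mbf{A}$, for each $p=1,\dots,N$ one takes $S_{pp}=A_{pp}^{(p)}$, then $L_{rp}=A_{rp}^{(p)}/S_{pp}$ for $r>p$, and finally updates $A_{rs}^{(p+1)}=A_{rs}^{(p)}-S_{pp}L_{rp}L_{sp}$ for $r,s>p$. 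Applied with $p=j$, $r=k$, $s=i$, the outer-product update contributes $-S_{jj}L_{kj}L_{ij}$ to the $(k,i)$ entry; this is nonzero by the hypothesis $L_{ij},L_{kj}\ne 0$ together with $S_{jj}\ne 0$, which is guaranteed by $\mbf{A}\succ 0$. Hence $(k,i)$ becomes (structurally) nonzero in $\mbf{A}^{(j+1)}$, and this status is preserved through the remaining updates for columns $j+1,\dots,i-1$, so that when column $i$ is eventually processed we get $L_{ki}=A_{ki}^{(i)}/S_{ii}\ne 0$, i.e.\ $ki\in\mathcal{L}$.

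The main subtlety is that the argument above only guarantees a nonzero \emph{contribution} to $(k,i)$; in principle, later updates could exactly cancel it. As is standard in symbolic sparse-factorization analysis, the set $\mathcal{L}$ should be read as the structural (predicted) sparsity pattern, in which any entry that ever receives a nonzero contribution is treated as nonzero. Under that convention the lemma is exactly the usual ``fill-in'' rule and the argument is complete; for generic (numerically non-degenerate) $\mbf{A}$ it also holds verbatim for the actual nonzero pattern, which is what the paper assumes when it says $\mathcal{L}$ is known as a by-product of the decomposition.
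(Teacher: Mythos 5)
Your proof is correct, but note that the paper does not actually prove this lemma---it simply defers to \citet{erisman75} and remarks that the fourth corner of the box is $ii \in \mathcal{L}$. What you have supplied is the standard symbolic fill-in argument that the cited reference is based on: the outer-product update for column $j$ deposits $-S_{jj}L_{kj}L_{ij}$ into position $(k,i)$, so if $ij$ and $kj$ are both structurally nonzero then so is $ki$ after column $i$ is processed. You handle the degenerate case $j=i$ cleanly, and---importantly---you flag the one real subtlety, namely that exact numerical cancellation could in principle zero out the $(k,i)$ entry, so the lemma is strictly a statement about the \emph{structural} (symbolic) pattern $\mathcal{L}$ rather than the exact numerical one. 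That caveat genuinely matters for the paper: Theorem~\ref{thm:closure} and the closure of $\mathcal{L}$ under backward substitution rely on Lemma~\ref{lem:box} holding for the pattern the algorithm actually tracks, and the safe reading (which your proof makes explicit and the paper leaves implicit) is that $\mathcal{L}$ is the predicted fill pattern, closed under the four-corners rule by construction. In short: your argument is the right one, is more complete than what the paper provides, and surfaces an assumption the paper glosses over.
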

\begin{proof}
Discussed by \citet{erisman75}.  Note, the fourth corner of the box is ${ii} \in \mathcal{L}$.
\end{proof}

We are now ready to prove a key result regarding $\mbf{C}$.  Consider the situation where $ij \in \mathcal{L}$ and $k \ell \in \bar{\mathcal{L}}$; if we can show that the entry at this location is zero, then it means that the entries of $\mbf{y}$ corresponding to the nonzero entries of $\mbf{L}$ can be calculated without ever calculating the remaining entries.  

\begin{theorem} \label{thm:closure}
{\em \citep{takahashi73} (Closure of $\mathcal{L}$)} If $ij \in \mathcal{L}$ and $k \ell \in \bar{\mathcal{L}}$ then $C_{ij, k\ell} = 0$.
\end{theorem}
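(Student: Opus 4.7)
The plan is to lean entirely on the closed-form expression~\eqref{eq:Cijkl} together with Lemma~\ref{lem:box}. Formula~\eqref{eq:Cijkl} reveals that $C_{ij,k\ell}$ is automatically zero unless $i=k$ or $i=\ell$, and in those two cases it equals $L_{\ell j}$ or $L_{kj}$ respectively. So the theorem reduces to showing that these particular entries of $\mbf{L}$ must vanish whenever $ij\in\mathcal{L}$ and $k\ell\in\bar{\mathcal{L}}$ (taking the convention that all index pairs refer to the lower triangle, so $i\ge j$ and $k>\ell$, the strict inequality coming from the fact that diagonal entries of $\mbf{L}$ are $1$ and hence always in $\mathcal{L}$).

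I would handle the case $i=k$ first. Here I must show $L_{\ell j}=0$. The subcase $\ell<j$ is immediate since $\mbf{L}$ is lower triangular. The subcase $\ell=j$ cannot occur: it would force $(k,\ell)=(i,j)$, putting the same index pair in both $\mathcal{L}$ and $\bar{\mathcal{L}}$. In the remaining subcase $\ell>j$, I argue by contradiction: assume $L_{\ell j}\neq 0$, so $\ell j\in\mathcal{L}$. Combined with $ij\in\mathcal{L}$ and the inequality $\ell<i$ (since $k>\ell$ and $i=k$), Lemma~\ref{lem:box} (Four Corners of a Box), applied to the two nonzero entries in column $j$, forces $i\ell\in\mathcal{L}$. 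Since $i\ell=k\ell$, this contradicts $k\ell\in\bar{\mathcal{L}}$.

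The case $i=\ell$ is essentially symmetric. Assuming $L_{kj}\neq 0$, I have $kj\in\mathcal{L}$ alongside $ij\in\mathcal{L}$, with the ordering $i<k$ guaranteed by $k>\ell=i$. Lemma~\ref{lem:box} then gives $ki\in\mathcal{L}$, and since $i=\ell$, this is $k\ell\in\mathcal{L}$, again contradicting $k\ell\in\bar{\mathcal{L}}$. In both cases, the claimed candidate value in~\eqref{eq:Cijkl} must be zero, so $C_{ij,k\ell}=0$.

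Conceptually, the proof is two invocations of Lemma~\ref{lem:box} wrapped in case analysis; the only real care required is bookkeeping the strict-versus-non-strict index inequalities to ensure the lemma is applicable (in particular ruling out the diagonal-collision subcase $\ell=j$) and confirming that the fourth corner of the box the lemma produces is exactly the entry $k\ell$ that the hypothesis forbids. I do not anticipate any deeper obstacle; the heavy lifting is entirely encapsulated in Lemma~\ref{lem:box}, and the Kronecker-algebra derivation of~\eqref{eq:Cijkl} does the rest.
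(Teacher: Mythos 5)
Your proposal is correct and follows essentially the same route as the paper's own proof: both reduce the claim via~\eqref{eq:Cijkl} to the two cases $i=k$ and $i=\ell$, and in each case derive a contradiction from Lemma~\ref{lem:box} by showing the forced fourth corner of the box is exactly $k\ell$. Your version is slightly more explicit about the subcases $\ell<j$ and $\ell=j$ (which the paper folds into the phrase ``this will still be zero unless $\ell j\in\mathcal{L}$''), but the substance is identical.
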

\begin{proof}
From the definitions of the indices, we know $j \leq i$ and $\ell < k$; cannot have $\ell = k$ since then $k \ell = kk \in \mathcal{L}$.   Assume $C_{ij,kl} \neq 0$.  According to~\eqref{eq:Cijkl}, this requires one of two conditions:

Case (i):  $i = \ell$ whereupon $C_{ij,k\ell} = L_{kj}$.  This will still be zero unless $kj \in \mathcal{L}$.    Lemma~\ref{lem:box} tells us that if $ij, kj \in \mathcal{L}$ then so must be $ki$ if $i < k$, which is true since $i = \ell < k$.  Furthermore, since $i = \ell$ it must be that $ki = k \ell \in \mathcal{L}$, which is a contradiction.  

Case (ii):  $i = k$ whereupon $C_{ij,k\ell} = L_{\ell j}$. This will still be zero unless $\ell j \in \mathcal{L}$.  Lemma~\ref{lem:box} tells us that if $ij, \ell j \in \mathcal{L}$ then so must be $i \ell$ if $\ell < i$, which is true since $\ell < k = i$.  Furthermore, since $i = k$ it must be that $i \ell = k \ell \in \mathcal{L}$, which is a contradiction.  

Both cases result in contradictions, therefore $C_{ij,k\ell} = 0$.
\end{proof}

\begin{figure}[t]
\includegraphics[width=\textwidth]{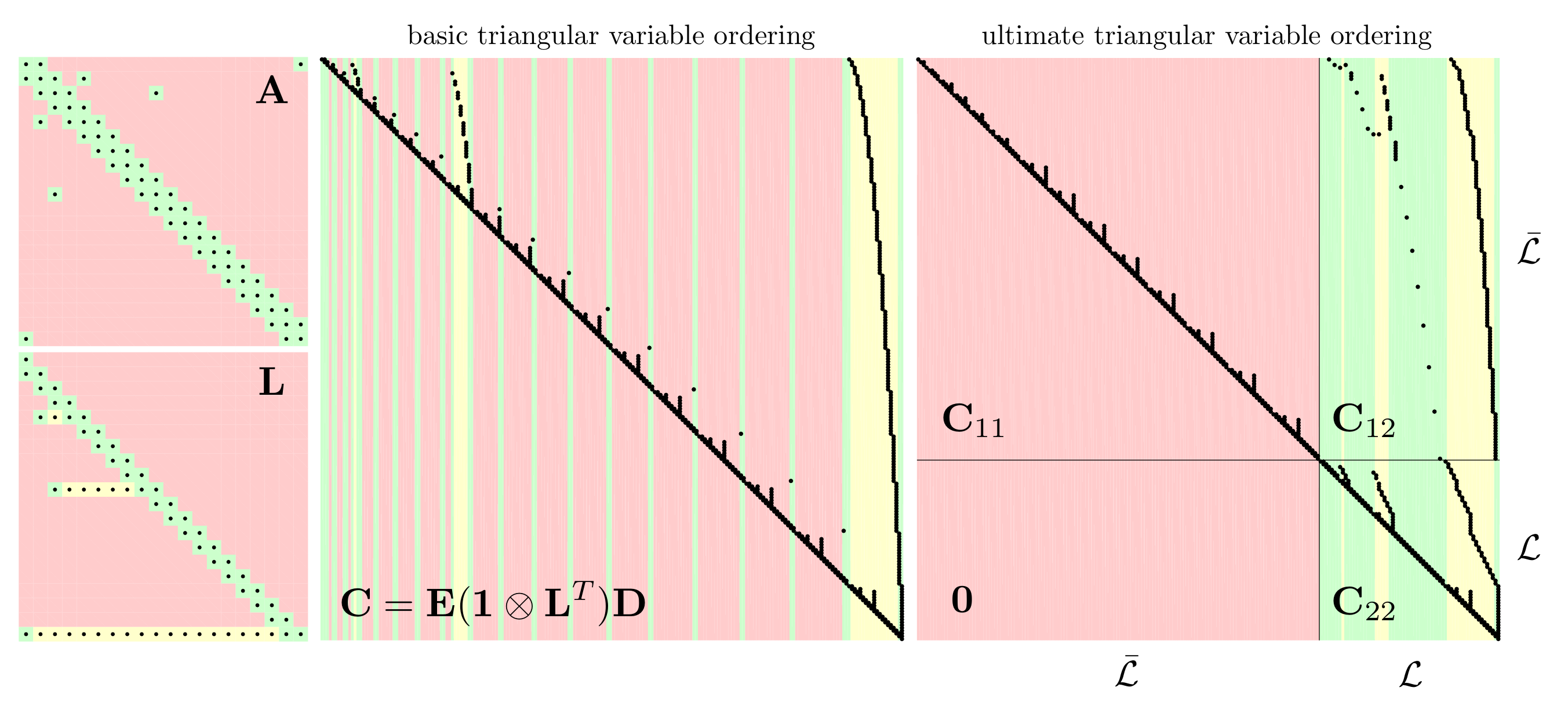}
\caption{Example sparsity patterns for the $\mbf{A}$ arising from the factor graph in Figure~\ref{fig:example1}, its triangular factor $\mbf{L}$, and the coefficient matrix $\mbf{C}$ of the sparse linear system for $\mbf{A}^{-1}$.  Two different secondary variable orders are shown for $\mbf{C}$.  Green corresponds to the desired entries of $\mbf{A}^{-1}$ that are nonzero in $\mbf{A}$, yellow to the extra nonzero fill-in entries in $\mbf{L}$, and red for the remaining entries.  Under the ultimate triangular variable order, the bottom-left block of $\mbf{C}$ is zero when the variables are partitioned into $\bar{\mathcal{L}}$ and $\mathcal{L}$.}
\label{fig:exampleC}
\end{figure}

\subsection{Triangular Secondary Variable Orders}

Depending on the secondary variable order chosen to construct the elimination/duplication matrices, $\mbf{E}$ and $\mbf{D}$, the structure of $\mbf{C} = \mbf{E} \left( \mbf{1} \otimes \mbf{L}^T \right) \mbf{D}$ will vary.  Conveniently, there are variable orders that can be chosen that make $\mbf{C}$ upper triangular.  These naturally have to do with the structure of $\mbf{L}$.  

First, we define the following {\em basic triangular} order
\begin{equation}
\mbf{v} = ( 11, 21, 22, 31, 32, 33, \ldots, NN ),
\end{equation}
where the first number is the $i$ and the second the $j$. This is simply the list of indices of a lower-triangular matrix from the top row to the bottom and left to right within each row.  We then have the following:

\begin{theorem} \label{thm:triangular}
{\em (Basic Triangular Variable Order)} Under the variable order, $\mbf{v}$, the matrix $\mbf{C}$ is upper triangular.
\end{theorem}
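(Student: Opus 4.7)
The plan is to verify upper-triangularity directly from the explicit entry formula~\eqref{eq:Cijkl}, combined with the structure of the order $\mbf{v}$. The indices listed in $\mbf{v}$ are exactly the lower-triangular pairs $(ij)$ with $j \le i$, in lexicographic order (first by $i$, then by $j$). Hence in this ordering $(ij)$ strictly follows $(k\ell)$ if and only if either $i > k$, or $i = k$ with $j > \ell$. Showing that $\mbf{C}$ is upper triangular therefore reduces to checking that $C_{ij,k\ell} = 0$ in both of these situations.

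I would then case-split along these two subcases and invoke~\eqref{eq:Cijkl}. In the first subcase, $i > k$, the bound $\ell \le k < i$ rules out both $i = k$ and $i = \ell$, so the ``otherwise'' clause of~\eqref{eq:Cijkl} gives $C_{ij,k\ell} = 0$ immediately. In the second subcase, $i = k$ and $j > \ell$, the first branch of~\eqref{eq:Cijkl} applies and yields $C_{ij,k\ell} = L_{\ell j}$; but $j > \ell$ places this entry in the strict upper triangle of the lower-triangular $\mbf{L}$, so it vanishes.

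The only real subtlety is the index bookkeeping: one must confirm that these two subcases exhaust the strict lower triangle of $\mbf{C}$ without absorbing the diagonal, and check that the $i = \ell$ branch of~\eqref{eq:Cijkl} does not cause trouble. For the latter, $i = \ell$ together with $\ell \le k$ forces $i \le k$; combined with the strict-lower-triangle condition on $\mbf{C}$, this either places us back into the $i = k$ branch already handled or contradicts $j \le i$ in the lexicographic tie-breaker. Unlike Theorem~\ref{thm:closure}, this result needs nothing about the sparsity pattern of $\mbf{A}$ or the four-corners Lemma~\ref{lem:box}; it is a purely structural consequence of $\mbf{L}$ being lower-triangular and $\mbf{v}$ being lexicographic, so I do not anticipate any serious obstacle beyond keeping the index inequalities straight.
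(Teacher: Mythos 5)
Your proposal is correct and follows essentially the same route as the paper's proof: both reduce the claim to showing $C_{ij,k\ell}=0$ whenever $(k\ell)$ precedes $(ij)$ lexicographically (i.e., $i>k$, or $i=k$ with $j>\ell$), and both dispose of the cases by reading off~\eqref{eq:Cijkl}, using $\ell\le k<i$ to kill both branches in the first subcase and $L_{\ell j}=0$ for $\ell<j$ in the second. The only difference is organizational (you split on the ordering subcase first, the paper splits on the branch of~\eqref{eq:Cijkl} first), which is immaterial.
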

\begin{proof}
For $\mbf{C}$ to be upper triangular, we must show that $C_{m,n} = 0$ for $n < m$.  If we let $m = ij$ and $n = k \ell$ then for $n < m$ we must either have (1) $k < i$ or (2) $k=i$ and $\ell < j$, according to our variable order.  From the definitions of the indices, we also know $j \leq i$ and $\ell \leq k$.  Assume $C_{ij,k\ell} \neq 0$.  According to~\eqref{eq:Cijkl}, this requires one of two conditions:

Case (i):  $i = \ell$.  For (1) we have $\ell \leq k < i$ so we cannot have $i = \ell$.  For (2) we have $\ell < j \leq i$ so again cannot have $i = \ell$.  Both cases result in contradictions so that $C_{ij,k\ell} = 0$.

Case (ii):  $i = k$, whereupon $C_{ij,k\ell} = L_{\ell j}$. For (1) we have $k < i$ so cannot have $i = k$; the contradiction implies $C_{ij,k\ell} = 0$.  For (2) we have $\ell < j$ so that $\ell j \in \bar{\mathcal{L}}$ and hence  $C_{ij,k\ell} = L_{\ell j} = 0$.

All cases result in $C_{ij,k\ell} = 0$.
%
%
%
\end{proof}

The basic variable order, $\mbf{v}$, is not the only one that results in an upper-triangular $\mbf{C}$.  We also have the following:

\begin{lemma}\label{lem:triperm}
{\em (Triangular Permutations)} Starting from any variable order, $\mbf{v}$, that results in $\mbf{C}$ upper triangular, we can swap the variables in adjacent positions $m$ and $m+1$ provided $m = ij \in \mathcal{L}$ and $m + 1 = k \ell \in \bar{\mathcal{L}}$ to create a new variable order, $\mbf{v}^\prime$, that also results in $\mbf{C}$ upper triangular.
\end{lemma}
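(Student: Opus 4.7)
The plan is to recognize that choosing a different secondary variable order amounts to a symmetric permutation of $\mbf{C}$. If $\mbf{P}$ denotes the elementary permutation matrix that exchanges positions $m$ and $m+1$, then the elimination and duplication matrices for the new order $\mbf{v}^\prime$ satisfy $\mbf{E}^\prime = \mbf{P}\mbf{E}$ and $\mbf{D}^\prime = \mbf{D}\mbf{P}^T$, so
\begin{equation}
\mbf{C}^\prime = \mbf{E}^\prime \left( \mbf{1} \otimes \mbf{L}^T \right) \mbf{D}^\prime = \mbf{P}\mbf{C}\mbf{P}^T.
\end{equation}
Thus $\mbf{C}^\prime$ is obtained from $\mbf{C}$ by simultaneously swapping rows $m, m+1$ and columns $m, m+1$, and the entire task reduces to checking that no entry strictly below the diagonal of $\mbf{C}^\prime$ is nonzero.

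Next I would enumerate the subdiagonal entries $C^\prime_{r,s}$ (with $r > s$) that could possibly be affected by the swap. Letting $\sigma$ denote the transposition of $m$ and $m+1$, every such entry equals $C_{\sigma(r),\sigma(s)}$. If neither index lies in $\{m, m+1\}$, then $\sigma$ fixes both indices and the entry vanishes by upper triangularity of $\mbf{C}$. A short case analysis of the remaining subdiagonal positions (one or both indices in $\{m, m+1\}$) shows that in every case \emph{except one} we still have $\sigma(r) > \sigma(s)$, so the corresponding original entry is zero; for instance, the entry $C^\prime_{m,s}$ with $s<m$ equals $C_{m+1,s}$, and the entry $C^\prime_{r,m+1}$ with $r>m+1$ equals $C_{r,m}$, both below the diagonal of $\mbf{C}$.

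The one exception is $C^\prime_{m+1,m} = C_{m,m+1} = C_{ij,k\ell}$, which was strictly \emph{above} the diagonal in $\mbf{C}$ and hence is not ruled out by $\mbf{C}$'s upper triangularity. This is precisely where the hypotheses $m = ij \in \mathcal{L}$ and $m+1 = k\ell \in \bar{\mathcal{L}}$ come into play: Theorem~\ref{thm:closure} (Closure of $\mathcal{L}$) guarantees $C_{ij,k\ell} = 0$. Combining all cases, $\mbf{C}^\prime$ has no nonzero entry below its diagonal and is therefore upper triangular.

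The main obstacle is essentially bookkeeping: making sure the case analysis of which subdiagonal positions can be affected by the swap is exhaustive, and that the \emph{only} case not already handled by the upper-triangularity of $\mbf{C}$ is the position $(m+1,m)$ where Theorem~\ref{thm:closure} applies. It is also worth remarking (to justify the asymmetric hypothesis pattern) that a swap in the opposite direction --- position $m \in \bar{\mathcal{L}}$ with position $m+1 \in \mathcal{L}$ --- would demand that $C_{k\ell,ij}$ (with $k\ell \in \bar{\mathcal{L}}$, $ij \in \mathcal{L}$) vanish, and Theorem~\ref{thm:closure} does not supply this, so the hypothesis ordering $\mathcal{L}$ before $\bar{\mathcal{L}}$ is essential.
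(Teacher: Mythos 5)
Your argument is correct and is essentially the paper's own proof in expanded form: the paper simply states that the lemma is a direct consequence of Theorem~\ref{thm:closure}, and your permutation bookkeeping ($\mbf{C}^\prime = \mbf{P}\mbf{C}\mbf{P}^T$, with the single new subdiagonal entry $C^\prime_{m+1,m} = C_{ij,k\ell}$ killed by the closure theorem) is exactly the detail being elided. Your closing remark on why the hypothesis must order $\mathcal{L}$ before $\bar{\mathcal{L}}$ is a worthwhile addition not made explicit in the paper.
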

\begin{proof}
This is a direct consequence of Theorem~\ref{thm:closure}.
\end{proof}

\begin{figure}[t]
\includegraphics[width=\textwidth]{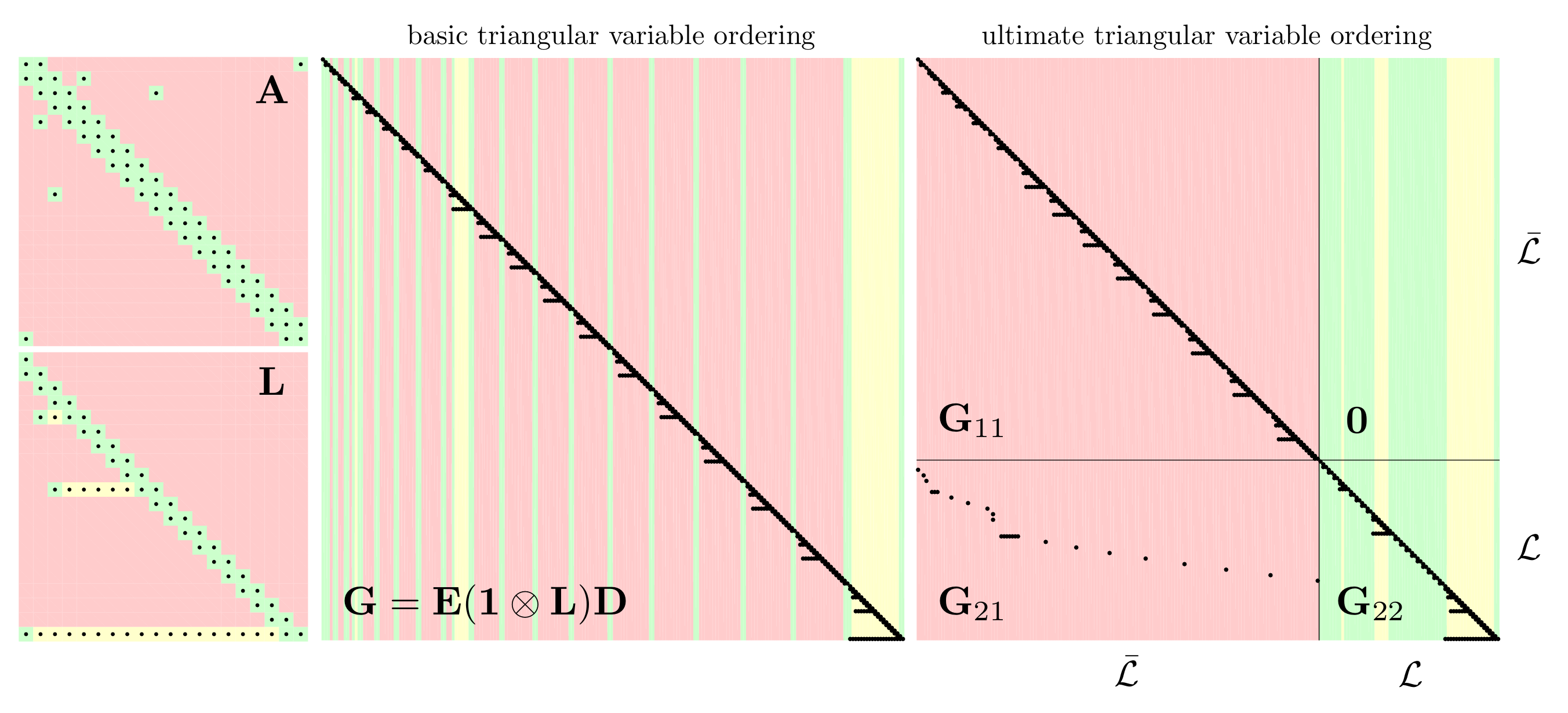}
\caption{Example sparsity patterns for an $\mbf{A}$, its triangular factor $\mbf{L}$, and the matrix $\mbf{G}$.  Two different secondary variable orders are shown for $\mbf{C}$.  Green corresponds to the desired entries of $\mbf{A}^{-1}$ that are nonzero in $\mbf{A}$, yellow to the extra nonzero entries in $\mbf{L}$, and red for the remaining entries.  Under the ultimate triangular variable order the upper-right block of $\mbf{G}$ is zero when the variables are partitioned into $\mathcal{L}$ and $\bar{\mathcal{L}}$.}
\label{fig:exampleG}
\end{figure}

Finally, we have the following key result:

\begin{theorem}\label{thm:ultimateorder}
{\em (Ultimate Triangular Variable Order)} Let $\mbf{v}^\prime$ be the variable order that starts with the basic order, $\mbf{v}$, and applies Lemma~\ref{lem:triperm} until, {\em ultimately}, no more swaps are possible.  Then using $\mbf{v}^\prime$ to build the elimination and duplication matrices, our linear system for the inverse becomes
\begin{equation}\label{eq:utvoss}
\bbm \mbf{C}_{11} & \mbf{C}_{12} \\ \mbf{0} & \mbf{C}_{22} \ebm \bbm \mbf{y}_1 \\ \mbf{y}_2 \ebm = \bbm \mbf{d}_1 \\ \mbf{d}_2 \ebm,
\end{equation}
with $\mbf{C}_{11}$, $\mbf{C}_{22}$ upper triangular, the `1' partition containing all the variables in $\bar{\mathcal{L}}$, and the `2' partition containing those in $\mathcal{L}$.
\end{theorem}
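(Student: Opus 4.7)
The plan is to combine three ingredients already established in this section: Theorem~\ref{thm:triangular} as the seed providing an upper-triangular $\mbf{C}$; Lemma~\ref{lem:triperm} as the invariant-preserving move; and a classical bubble-sort argument to guarantee that the sequence of moves terminates in the desired partition. The main obstacle I anticipate is not any one of these individually, but rather verifying that the terminal configuration is fully sorted (all of $\bar{\mathcal{L}}$ first, then all of $\mathcal{L}$) rather than merely locally stuck in some weaker sense.

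First I would set up termination. Associate to any variable order a nonnegative integer, the \emph{inversion count}, defined as the number of pairs of positions $m<n$ where position $m$ holds an index in $\mathcal{L}$ and position $n$ holds an index in $\bar{\mathcal{L}}$. Each allowed swap, by its very definition in Lemma~\ref{lem:triperm}, exchanges an adjacent $\mathcal{L}$--$\bar{\mathcal{L}}$ pair and therefore reduces the inversion count by exactly one. Since the count is finite at the outset and bounded below by zero, the swap process must halt after finitely many steps.

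Next I would characterize the terminal order $\mbf{v}'$. At termination no further swaps are permitted, which means no position labelled $\mathcal{L}$ is immediately followed by a position labelled $\bar{\mathcal{L}}$. Because we are arranging only two kinds of labels in a line, the absence of any adjacent $\mathcal{L}\bar{\mathcal{L}}$ pattern forces every $\bar{\mathcal{L}}$ index to precede every $\mathcal{L}$ index (the standard bubble-sort invariant for a two-letter alphabet). Hence $\mbf{v}'$ cleanly partitions into a first block indexing all of $\bar{\mathcal{L}}$ and a second block indexing all of $\mathcal{L}$, producing exactly the $2\times 2$ block splitting of $\mbf{C}$ and $\mbf{y}$ appearing in~\eqref{eq:utvoss}.

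Finally I would read off the block form. By Theorem~\ref{thm:triangular} the basic order $\mbf{v}$ makes $\mbf{C}$ upper triangular, and inductively applying Lemma~\ref{lem:triperm} through the swap sequence shows that $\mbf{v}'$ does as well. Upper triangularity with respect to the $(1,2)$ partition immediately yields $\mbf{C}_{21}=\mbf{0}$, while $\mbf{C}_{11}$ and $\mbf{C}_{22}$, sitting along the diagonal of an upper-triangular matrix, are themselves upper triangular, which completes the claim.
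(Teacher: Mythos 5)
Your proof is correct and follows essentially the same route as the paper: seed with the upper-triangular $\mbf{C}$ from Theorem~\ref{thm:triangular}, preserve that property through the swaps of Lemma~\ref{lem:triperm}, and then read off the block structure (the paper cites Theorem~\ref{thm:closure} for $\mbf{C}_{21}=\mbf{0}$, whereas you obtain it directly from the upper triangularity of $\mbf{C}$ under the terminal order together with the position of the $\bar{\mathcal{L}}$ block --- these are equivalent, since the lemma itself rests on that theorem). Your inversion-count termination argument and the two-letter ``no adjacent $\mathcal{L}\bar{\mathcal{L}}$ pair implies fully sorted'' observation are details the paper leaves implicit, and they are worth having.
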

\begin{proof}
The basic variable order, $\mbf{v}$, results in an upper-triangular $\mbf{C}$ according to Theorem~\ref{thm:triangular}.  Then by performing the swaps according to Lemma~\ref{lem:triperm} we maintain $\mbf{C}$ upper triangular until no more swaps are possible. Theorem~\ref{thm:closure} then tells us the lower-left block matrix is $\mbf{0}$.
\end{proof}

Figure~\ref{fig:exampleC} provides an example of the sparsity pattern of $\mbf{C}$.  While we would never actually construct the full linear system, $\mbf{C} \mbf{y} = \mbf{d}$, it provides some insight into efficient methods of solving for the desired entries of $\mbf{A}^{-1}$.  
The ultimate triangular variable order is particularly appealing as we can simply perform a backward substitution and start by solving for all the entries in $\mathcal{L}$, or in other words the entries of $\mbf{A}^{-1}$ corresponding to the nonzero entries of $\mbf{L}$ (and hence $\mbf{A}$).  We can also continue to back substitute for additional entries of the inverse as desired.

We will later also make use of the matrix $\mbf{G} = \mbf{E}( \mbf{1} \otimes \mbf{L}) \mbf{D}$, which is lower-triangular for any of our triangular variable orders.  Figure~\ref{fig:exampleG} provides an example sparsity pattern for $\mbf{G}$ under both the basic and ultimate triangular variable orders.  Similarly to~\eqref{eq:Cijkl}, we have the following expression for the entries of $\mbf{G}$:
\begin{equation}\label{eq:Gijkl}
G_{ij,k\ell} = \left\{\begin{tabular}{ll}  
$L_{j\ell}$ & \mbox{if $i = k$} \\
$0$ & \mbox{otherwise}
\end{tabular}\right. .
\end{equation}
We can state the following theorems relating to $\mbf{G}$:

\begin{theorem} \label{thm:triangular2}
{\em (Basic Triangular Variable Order II)} Under the variable order, $\mbf{v}$, the matrix $\mbf{G} = \mbf{E}( \mbf{1} \otimes \mbf{L}) \mbf{D}$ is lower triangular.
\end{theorem}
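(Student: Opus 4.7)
The plan is to mirror the proof of Theorem~\ref{thm:triangular}, but now using the expression~\eqref{eq:Gijkl} for the entries of $\mbf{G}$ and showing the complementary claim: $G_{m,n} = 0$ whenever $n > m$ under the basic order $\mbf{v}$. Since~\eqref{eq:Gijkl} is simpler than~\eqref{eq:Cijkl} (only one nonvanishing case rather than two), I expect the argument to be a short case analysis with no real obstacle.

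Concretely, I would let $m = ij$ and $n = k\ell$ and translate the condition $n > m$ in the order $\mbf{v}$ into the two possibilities (1) $k > i$, or (2) $k = i$ with $\ell > j$. The indexing constraints inherited from living in the lower-triangular index set give $j \leq i$ and $\ell \leq k$. I would then assume $G_{ij,k\ell} \neq 0$ and apply~\eqref{eq:Gijkl}, which forces $i = k$ and $L_{j\ell} \neq 0$.

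Case~(1) immediately contradicts $i = k$, so $G_{ij,k\ell} = 0$ there. In case~(2), $i = k$ holds, but then $\ell > j$ places $L_{j\ell}$ strictly above the diagonal of the lower-triangular matrix $\mbf{L}$, so $L_{j\ell} = 0$ and hence $G_{ij,k\ell} = 0$. Both cases give $G_{ij,k\ell} = 0$ for $n > m$, completing the proof that $\mbf{G}$ is lower triangular under $\mbf{v}$.

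The hard part, if any, is really just confirming that~\eqref{eq:Gijkl} has the stated form; this would be a direct Kronecker/elimination/duplication computation exactly analogous to the derivation of~\eqref{eq:Cijkl}, and since the paper already states~\eqref{eq:Gijkl} we may take it as given. After that the triangularity claim is essentially immediate from the lower-triangularity of $\mbf{L}$ itself.
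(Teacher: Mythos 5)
Your proposal is correct and is precisely the argument the paper intends: its own proof of this theorem is just the one-line remark ``Similar to the proof of Theorem~\ref{thm:triangular},'' and your case analysis using~\eqref{eq:Gijkl} (with $k>i$ contradicting $i=k$, and $\ell>j$ forcing $L_{j\ell}=0$ by lower-triangularity of $\mbf{L}$) is exactly that analogous argument carried out in full.
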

\begin{proof}
Similar to the proof of Theorem~\ref{thm:triangular}.
\end{proof}

\begin{lemma}\label{lem:triperm2}
{\em (Triangular Permutations II)} Starting from any variable order, $\mbf{v}$, that results in $\mbf{G}$ lower triangular, we can swap the variables in adjacent positions $m$ and $m+1$ provided $m = ij \in \mathcal{L}$ and $m + 1 = k \ell \in \bar{\mathcal{L}}$ to create a new variable order, $\mbf{v}^\prime$, that also results in $\mbf{G}$ lower triangular.
\end{lemma}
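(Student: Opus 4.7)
The plan is to mirror the one-line proof of Lemma~\ref{lem:triperm} (which simply cited Theorem~\ref{thm:closure}) by establishing the natural analog of closure for $\mbf{G}$, namely that $G_{k\ell,\, ij} = 0$ whenever $ij \in \mathcal{L}$ and $k\ell \in \bar{\mathcal{L}}$. Once such a closure fact is available, the lemma follows immediately because an adjacent swap only affects a single above-diagonal slot.

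First, I would argue that the only entry that can violate lower-triangularity after the swap is the one at new position $(m, m+1)$. Indexing the entries by variable names rather than positions, the swap moves $G_{ij,\, k\ell}$ from $(m, m+1)$ to $(m+1, m)$ (above-to-below, no constraint after the swap) and moves $G_{k\ell,\, ij}$ from $(m+1, m)$ to $(m, m+1)$ (below-to-above, now requires to be zero); every other row/column pair keeps its side of the diagonal unchanged and was already zero by hypothesis. So the whole lemma reduces to proving $G_{k\ell,\, ij} = 0$.

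To prove that, I would plug $G_{k\ell,\, ij}$ into formula~\eqref{eq:Gijkl}: the expression is identically zero unless $k = i$, in which case it equals $L_{\ell j}$. In that sub-case I would use the Four Corners of a Box (Lemma~\ref{lem:box}) combined with the facts $ij \in \mathcal{L}$ and $i\ell = k\ell \in \bar{\mathcal{L}}$ to show $L_{\ell j} = 0$. The dispatch is a short case split on $\ell$ against $i$ and $j$: the equality $\ell = i$ contradicts $k\ell \in \bar{\mathcal{L}}$ because $\mbf{L}$ has unit diagonal, and $\ell = j$ contradicts $ij \in \mathcal{L}$; the case $\ell < j$ gives $L_{\ell j} = 0$ directly from lower-triangularity of $\mbf{L}$; and in the remaining case $j < \ell < i$, assuming $\ell j \in \mathcal{L}$ and applying Lemma~\ref{lem:box} to $ij$ and $\ell j$ would force $i\ell \in \mathcal{L}$, contradicting $i\ell = k\ell \in \bar{\mathcal{L}}$.

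The main obstacle is really just bookkeeping: being careful to distinguish old indices from new positions when locating the single entry that flips across the diagonal, and then cleanly discharging the three sub-cases on $\ell$. The underlying combinatorics is essentially the mirror image of the proof of Theorem~\ref{thm:closure}, so no new machinery is required. For a tidy write-up, I would separate out the short ``closure'' statement for $\mbf{G}$ as its own proposition so that Lemma~\ref{lem:triperm2} itself can be dispatched in one line, matching the style already used for the $\mbf{C}$ case.
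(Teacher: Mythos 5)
Your proposal is correct and follows essentially the same route as the paper, which disposes of this lemma by citing Theorem~\ref{thm:closure} directly; the closure fact you isolate for $\mbf{G}$, namely $G_{k\ell,\,ij}=0$ for $ij\in\mathcal{L}$ and $k\ell\in\bar{\mathcal{L}}$, is exactly the $i=k$ case of that theorem's proof, since by~\eqref{eq:Gijkl} and~\eqref{eq:Cijkl} the entry $G_{k\ell,\,ij}$ coincides with the $i=k$ branch of $C_{ij,\,k\ell}$. Your version is slightly more careful than the paper's one-line citation, since Theorem~\ref{thm:closure} as stated concerns $\mbf{C}$ rather than $\mbf{G}$, and spelling out the $\mbf{G}$-analog (plus the bookkeeping that only the single slot $(m,m+1)$ can cross the diagonal) is a legitimate and welcome addition.
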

\begin{proof}
This is a direct consequence of Theorem~\ref{thm:closure}.
\end{proof}

\begin{theorem}\label{thm:ultimateorder2}
{\em (Ultimate Triangular Variable Order II)} Let $\mbf{v}^\prime$ be the variable order that starts with the basic order, $\mbf{v}$, and applies Lemma~\ref{lem:triperm2} until, {\em ultimately}, no more swaps are possible.  Then using $\mbf{v}^\prime$ to build the elimination and duplication matrices, the matrix $\mbf{G}$ is lower triangular.
\end{theorem}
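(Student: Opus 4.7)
The plan is to recycle, nearly verbatim, the argument used to prove Theorem~\ref{thm:ultimateorder}, but replacing each ingredient by its $\mbf{G}$-analog. Concretely, I would first invoke Theorem~\ref{thm:triangular2} to establish the base case: under the basic secondary variable order $\mbf{v}$, the matrix $\mbf{G} = \mbf{E}(\mbf{1} \otimes \mbf{L})\mbf{D}$ is lower triangular. Then I would perform the sequence of allowed adjacent swaps described in the theorem statement, appealing to Lemma~\ref{lem:triperm2} as the inductive step: each such swap takes an order under which $\mbf{G}$ is lower triangular to another order under which $\mbf{G}$ is still lower triangular. Termination is immediate because each permitted swap strictly reduces the count of adjacent pairs $(m, m+1)$ with $m \in \mathcal{L}$ and $m+1 \in \bar{\mathcal{L}}$, a nonnegative integer, so only finitely many swaps can occur before $\mbf{v}^\prime$ is reached. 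Concatenating these observations yields the claim.

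I do not foresee a real obstacle, since all the genuine work has already been carried out upstream: the combinatorial identity~\eqref{eq:Gijkl} for the entries of $\mbf{G}$ is what drives Theorem~\ref{thm:triangular2} (by case analysis on the indices against the ordering constraint), and Lemma~\ref{lem:triperm2} already encapsulates the only subtle point, namely that swapping an $\mathcal{L}$-index with the immediately following $\bar{\mathcal{L}}$-index cannot destroy lower triangularity, which itself is a consequence of Theorem~\ref{thm:closure}. The proof therefore reduces to stringing together these previously established facts, in complete parallel with the proof of Theorem~\ref{thm:ultimateorder}; no analog of the lower-left block argument at the end of that proof is needed here, because the present theorem statement asserts only lower triangularity (the block-zero structure visible in Figure~\ref{fig:exampleG} would then follow as a bonus from the terminal ordering placing all $\bar{\mathcal{L}}$-indices before all $\mathcal{L}$-indices).
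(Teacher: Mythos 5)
Your proposal follows the paper's own (one-line) proof exactly: establish the base case via Theorem~\ref{thm:triangular2} and apply Lemma~\ref{lem:triperm2} at each adjacent swap until none remain. The only quibble is your termination metric --- the number of \emph{adjacent} pairs with $m \in \mathcal{L}$ and $m+1 \in \bar{\mathcal{L}}$ need not strictly decrease at each swap (a swap can create a new adjacent bad pair one position over); the correct monovariant is the total number of inversions, i.e., pairs of positions $p < q$ whose entries lie in $\mathcal{L}$ and $\bar{\mathcal{L}}$ respectively, which drops by exactly one per swap --- but this is cosmetic, and the paper's proof does not address termination at all.
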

\begin{proof}
Follows from applying Lemma~\ref{lem:triperm2} until no more swaps are possible.
\end{proof}

\begin{figure}[t]
\includegraphics[width=\textwidth]{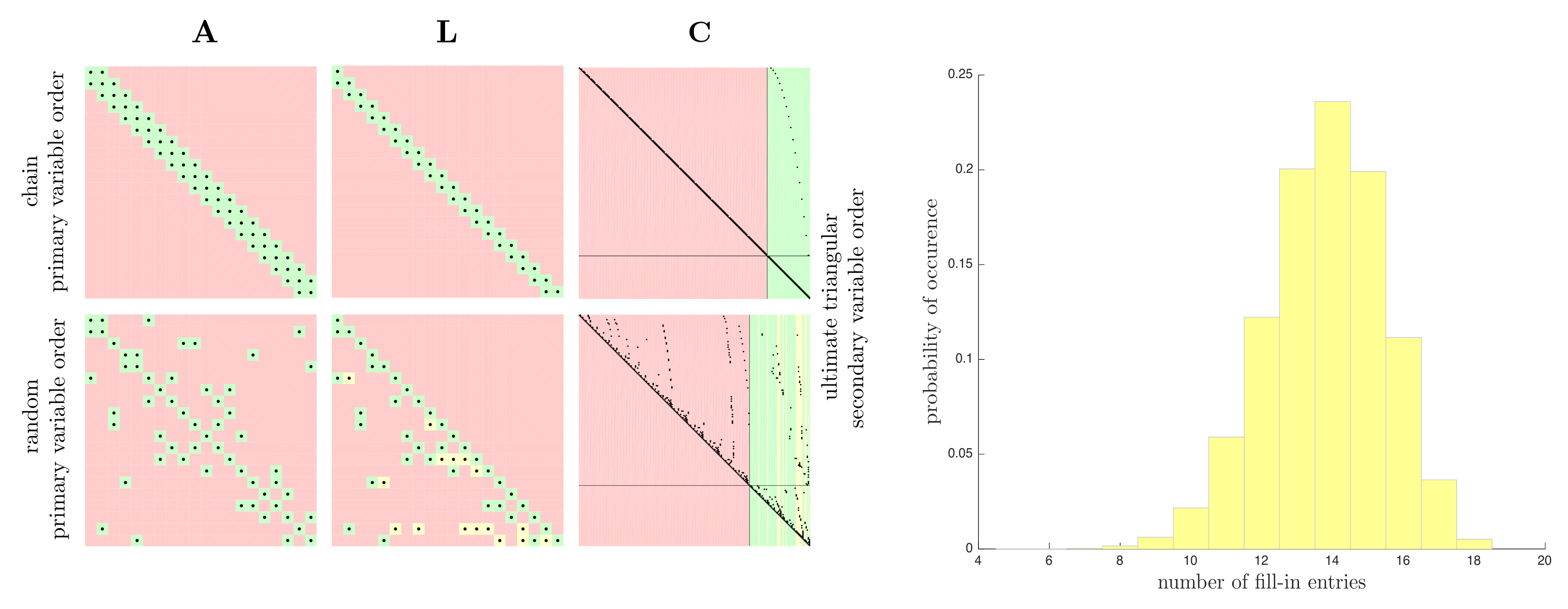}
\caption{Even with a factor graph that is a simple chain, it is possible to have fill-in occur in the triangular factor, $\mbf{L}$.  In this example of 20 variables in a chain, the chain primary variable order has no fill-in while a random variable order does have fill-in (16 yellow entries in this case) and therefore requires solving for more entries of the inverse of $\mbf{A}$ in the secondary problem.  The histogram shows the empirical probability of number of fill-in entries, where zero corresponds to the chain ordering.  We see that a random selection of the primary variable order is highly unlikely to result in zero fill-in but also to result in complete fill-in, which would be 181 entries in this case.}
\label{fig:chain_fillin}
\end{figure}

\subsection{Alternate Expression for Triangular Factorization}
\label{sec:iterL}

Our desired triangular factorization of $\mbf{A}$ is
\begin{equation}
\mbf{L} \mbf{S} \mbf{L}^T = \mbf{A},
\end{equation}
where $\mbf{A}$ is sparse and known and we need to solve for lower-triangular $\mbf{L}$ and diagonal $\mbf{S}$.  Vectorizing both sides we have
\begin{equation}
\vec(\mbf{L} \mbf{S} \mbf{L}^T) = \vec(\mbf{A}).
\end{equation}
Applying one of the Kronecker product identities on the left we have
\begin{equation}
\left( \mbf{1} \otimes \mbf{L} \right) \vec(\mbf{S} \mbf{L}^T) = \vec(\mbf{A}).
\end{equation}
Premultiplying by the elimination matrix $\mbf{E}$ we get
\begin{equation}
\mbf{E} \left( \mbf{1} \otimes \mbf{L} \right) \vec(\mbf{S} \mbf{L}^T) = \mbf{E} \vec(\mbf{A}),
\end{equation}
and then using one of the identities from Appendix~\ref{app:elimdup} we arrive at
\begin{equation}
\underbrace{\mbf{E} \left( \mbf{1} \otimes \mbf{L} \right) \mbf{D}}_{\mbf{G}} \underbrace{\mbf{E} \vec(\mbf{S} \mbf{L}^T)}_{\mbf{v}} = \underbrace{\mbf{E} \vec(\mbf{A})}_{\mbf{h}},
\end{equation}
where as stated in Theorem~\ref{thm:ultimateorder2} and shown in Figure~\ref{fig:exampleG} the matrix $\mbf{G}$ is lower triangular.  Using the ultimate triangular variable order, we can partition our variables into the $\bar{\mathcal{L}}$ and $\mathcal{L}$ sets such that
\begin{equation}
\bbm \mbf{G}_{11} & \mbf{0} \\ \mbf{G}_{21} & \mbf{G}_{22} \ebm \bbm \mbf{0} \\ \mbf{v}_2 \ebm = \bbm \mbf{0} \\ \mbf{h}_2 \ebm,
\end{equation}
where importantly we have $\mbf{v}_1 = \mbf{0}$ and $\mbf{h}_1 = \mbf{0}$; this is because the chosen secondary variable order deliberately puts all of the nonzero elements of $\mbf{S}\mbf{L}^T$ (and the upper-half of $\mbf{A}$) into the second partition leaving the first partition full of zeros.  From this, we see that we only need to solve 
\begin{equation}
\mbf{G}_{22} \mbf{v}_2 = \mbf{h}_2,
\end{equation}
for $\mbf{v}_2$, which will contain all the nonzero entries of $\mbf{S}\mbf{L}^T$; this can be done by backward substitution since $\mbf{G}_{22}$ is lower triangular.  We can therefore reconstitute $\mbf{S}\mbf{L}^T$ from $\mbf{v}_2$ and then $\mbf{S} = \mbox{diag}( \mbf{S} \mbf{L}^T)$ and $\mbf{L} = ( \mbf{S} \mbf{L}^T )^T \mbf{S}^{-1} $.  However, we note that $\mbf{G}_2$ also depends on $\mbf{L}$, which means as we solve from top to bottom we need to be substituting in the latest entry of $\mbf{L}$ to $\mbf{v}_2$.  We start with initial guesses $\mbf{L} = \mbf{1}$ and $\mbf{S} = \mbox{diag}(\mbf{A})$.  We will later exploit this formulation of the triangular factorization to turn it into an iterative scheme wherein we build $\mbf{G}_{22}$ from the previous $\mbf{L}$, solve for $\mbf{v}_2$ and hence the next $\mbf{S}$ and next $\mbf{L}$, and repeat until converged; this is guaranteed to converge since the original forward substitution occurs in the process.

\subsection{Trees vs. Loops}

It is well known that inference for factor graphs with loops can be harder than inference for factor graphs without loops (i.e., trees).  The reason is that for trees, it is always possible to find a primary variable order for which there is no fill-in in the resulting triangular factors.  However, it is not true that all primary variable orders result in no fill-in even when the graph is a tree or even a chain.  Figure~\ref{fig:chain_fillin} shows what happens for two different primary variable orders, chain and random, for a simple chain of variables.  A good heuristic to order the primary variables, even in the case of a loopy graph, is to select a spanning tree for the factor graph and order the variables by a search along this graph.  Our global solution will find the correct solution even if the underlying factor graph has loops, at the expense of some overhead calculations arising from the fill-in of $\mbf{L}$ for the chosen primary variable order.

\subsection{Summary}

Summarizing the global solution to Problem~\ref{prob:flapogi}, we do the following:
\beqn{global}
\underbrace{\mbf{E} \left( \mbf{1} \otimes \mbf{L} \right) \mbf{D}}_{\mbf{G}} \, \mbf{v}  &  = &  \underbrace{\mbf{E} \vec(\mbf{A})}_{\mbf{h}}  \qquad \quad\, \mbox{(solve for $\mbf{v}_2$ by sparse forward substitution)} \\
\mbf{L} \mbf{S} \, \mbf{w} & = & \mbf{b}, \qquad \qquad \qquad\mbox{(solve for $\mbf{w}$ by sparse forward substitution)} \\
\mbf{L}^T \mbf{x} & = & \mbf{w}, \hspace{0.01in}\qquad \qquad\quad\;\;\,\mbox{(solve for $\mbf{x}$ by sparse backward substitution)} \\
\underbrace{\mbf{E} \left( \mbf{1} \otimes \mbf{L}^T \right) \mbf{D}}_{\mbf{C}} \, \mbf{y} & = & \underbrace{\mbf{E} \vec(\mbf{S}^{-1})}_{\mbf{d}}, \qquad \mbox{(solve for $\mbf{y}_2$ by sparse backward substitution)}
\eeqn
where we assume the ultimate triangular (secondary) variable order for $\mbf{v}$ (so that $\mbf{v}_2$ contains the entries of $\mbf{L}$ and $\mbf{S}$) and $\mbf{y}$ (so that $\mbf{y}_2$ contains the desired entries of $\mbf{A}^{-1}$).  We note that all four of these linear systems have a left-hand side coefficient matrix that is triangular; the first two are upper triangular and can be solved using sparse forward substitution while the last two are lower triangular and can be solved by sparse backward substitution.  At implementation, we would need to use a sparse matrix library to minimize storage and compute.

\section{Local Solution}
\label{sec:localsolution}

We use this section to show how we can turn the global solution of the previous section into a local message-passing scheme that solves both the primary and secondary systems with guaranteed convergence.  We begin by showing how to solve each of the systems in~\eqref{eq:global} in an iterative manner (with provable convergence) and then show how to deploy the iterative scheme using local message passing amongst a group of $N$ agents performing their calculations in parallel.

%

\subsection{Triangular Factorization}

A critical step is our ability to carry out the triangular factorization of $\mbf{A}$ in a decentralized manner.  We will build on the results of Section~\ref{sec:iterL} above.  Our goal is to solve
\begin{equation}
\mbf{G}_{22} \mbf{v}_2 = \mbf{h}_2,
\end{equation}
for $\mbf{v}_2$ using only local message passing.  Consider the $ij$ row of this system:
\begin{equation}
\sum_{k\ell} G_{ij,k\ell} v_{k \ell} = h_{ij}.
\end{equation}
We know that $\mbf{G}_{22}$ is lower triangular so we can restrict the summation:
\begin{equation}
\sum_{k \ell \leq ij} G_{ij,k\ell} v_{k \ell} = h_{ij}.
\end{equation}
We can then break the summation into two parts:
\begin{equation}
G_{ij,ij} v_{ij} + \sum_{k\ell < ij} G_{ij,k\ell} v_{k \ell} = h_{ij}.
\end{equation}
Then using~\eqref{eq:Gijkl} we have
\begin{equation}
\underbrace{L_{jj}}_{1} v_{ij} + \sum_{\ell < j} L_{j\ell} v_{i \ell} = \underbrace{h_{ij}}_{A_{ij}}.
\end{equation}
Isolating for $v_{ij}$ we have
\begin{equation}
v_{ij} =  A_{ij} - \sum_{\ell < j} L_{j\ell} v_{i \ell}.
\end{equation}
It is not too difficult to show that
\begin{equation}
v_{ij} = L_{ij} S_{jj},
\end{equation}
using the definition of $\mbf{v}$.  Substituting this we have
\begin{equation}
L_{ij} S_{jj} =  A_{ij} - \sum_{\ell < j} L_{j\ell} L_{i\ell} S_{\ell \ell}.
\end{equation}
We can break this into two parts:
\beqn{}
S_{ii} & = &  A_{ii} - \sum_{\ell < i} L_{i\ell}^2 S_{\ell \ell}, \\
L_{ij} & = & \bigl( A_{ij} - \sum_{\ell < j} L_{j\ell} L_{i\ell} S_{\ell \ell} \bigr) / S_{jj} \qquad \mbox{with $i  > j$},
\eeqn
where we make use of the fact that $L_{ii} = 1$.
We can use these as an iterative update for calculating all the required entries of $\mbf{S}$ and $\mbf{L}$ in a sparse way.  

\subsection{Primary System}

Now that we have the triangular factorization, we can easily solve the primary system using the usual forward and backward substitution, but we can do this in an iterative way as well.  To solve $\mbf{L}\mbf{S} \, \mbf{w} = \mbf{b}$ we can consider row $i$:
\begin{equation}
\sum_j L_{ij} S_{jj} w_j = b_i.
\end{equation}
Due to the structure of $\mbf{L} \mbf{S}$ we can equivalently write this as
\begin{equation}
\underbrace{L_{ii}}_{1}S_{ii} w_i + \sum_{j< i} L_{ij} S_{jj} w_j = b_i,
\end{equation}
and then isolating for $w_i$ we have
\begin{equation}
w_i = \bigl( b_i - \sum_{j< i} L_{ij} S_{jj} w_j \bigr)/S_{ii}.
\end{equation}
We can use this as an iterative scheme, initialized with $w_i = b_i$.

To solve $\mbf{L}^T \mbf{x} = \mbf{w}$ we can again consider row $i$:
\begin{equation}
\sum_j L_{ji} x_j = w_i.
\end{equation}
Due to the structure of $\mbf{L}^T$ we can equivalently write this as
\begin{equation}
\underbrace{L_{ii}}_{1} x_i + \sum_{j > i} L_{ji} x_j = w_i,
\end{equation}
and then isolating for $w_i$ we have
\begin{equation}
x_i = w_i - \sum_{j > i} L_{ji} x_j.
\end{equation}
We can use this as an iterative scheme, initialized with $x_i = b_i$.

\subsection{Secondary System}

The last step is to solve $\mbf{C}_{22} \mbf{y}_2 = \mbf{d}_2$ for the desired entires of the inverse of $\mbf{A}$ stored in $\mbf{y}_2$.  We again consider row $ij$ of this system of equations:
\begin{equation}
\sum_{k\ell} C_{ij,k\ell} y_{k\ell} = d_{ij}.
\end{equation}
We know that $\mbf{C}_{22}$ is upper triangular so we can restrict the summation:
\begin{equation}
\sum_{k\ell \geq ij} C_{ij,k\ell} y_{k\ell} = d_{ij}.
\end{equation}
Since we are using the ultimate triangular variable order, to have $k\ell \geq ij$ we must have (i) $i < k$ or (ii) $i = k$ and $j \leq \ell$.  The nature of the indices is also such that $j \leq i$ and $\ell \leq k$.  Then using~\eqref{eq:Cijkl} and~\eqref{eq:dij} we can write
\begin{equation}
\sum_{i \geq \ell \geq j} L_{\ell j} y_{i\ell} + \sum_{k > i} L_{k j} y_{k i} = S_{jj}^{-1} \delta_{ij}.
\end{equation}
The first term comes from $i = k$ so then $j \leq \ell \leq k = i$.  The second term comes from $i = \ell$ so then $j \leq i = \ell < k$.  Breaking the first summation into two parts we have
\begin{equation}
\underbrace{L_{jj}}_{1} y_{ij} + \sum_{i \geq \ell > j} L_{\ell j} y_{i\ell} + \sum_{k > i} L_{k j} y_{k i} = S_{jj}^{-1} \delta_{ij}, 
\end{equation}
and then isolating for $y_{ij}$ we arrive at
\begin{equation}
y_{ij} = S_{jj}^{-1} \delta_{ij} - \sum_{i \geq \ell > j} L_{\ell j} y_{i\ell} - \sum_{k > i} L_{k j} y_{k i}, \qquad \mbox{with $i  \geq j$}.
\end{equation}
We can use this as an iterative update for calculating all the entries of $\mbf{y}_2$.

\subsection{Summary and Convergence}

At the single-entry level we can carry out the following iterative updates (synchronously or asyncronously) to solve both the primary and secondary systems:
\beqn{iterupdate}
S_{ii} & \leftarrow &  A_{ii} - \sum_{\ell < i} L_{i\ell}^2 S_{\ell \ell}, \\
L_{ij} & \leftarrow & \bigl( A_{ij} - \sum_{\ell < j} L_{j\ell} L_{i\ell} S_{\ell \ell} \bigr) / S_{jj}, \qquad \mbox{with $i  > j$ and $ij \in \mathcal{L}$}, \\
w_i & \leftarrow & \bigl( b_i - \sum_{j< i} L_{ij} S_{jj} w_j \bigr)/S_{ii}, \\
x_i & \leftarrow & w_i - \sum_{j > i} L_{ji} x_j, \\
y_{ij} & \leftarrow & S_{jj}^{-1} \delta_{ij} - \sum_{i \geq \ell > j} L_{\ell j} y_{i\ell} - \sum_{k > i} L_{k j} y_{k i}, \qquad \mbox{with $i  \geq j$ and $ij \in \mathcal{L}$}.
\eeqn
Using these local updates, we can make the following statement:

\begin{theorem}\label{thm:localconv}
The local update scheme of~\eqref{eq:iterupdate} converges to the unique solution of both the primary and secondary problems.  In the case of synchronous updates (all variables updated simultaneously), the scheme is guaranteed to converge in $2(|\mathcal{L}| +N)$ iterations where $N \leq |\mathcal{L}| \leq N(N+1)/2$.
\end{theorem}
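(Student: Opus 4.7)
My approach is to interpret the five scalar updates in~\eqref{eq:iterupdate} as Gauss-Seidel-type steps applied to the four triangular linear systems making up the global solution~\eqref{eq:global}. The plan has three pieces: (1) verify that every fixed point of~\eqref{eq:iterupdate} is the (unique) solution of~\eqref{eq:global}; (2) show under synchronous updates that correct values are sticky and that at least one new variable becomes correct per iteration; (3) count unknowns to obtain the bound $2(|\mathcal{L}|+N)$.

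For piece (1) I would take each update in turn and show it is just one row of the corresponding system, rearranged. Writing a row of $\mbf{G}_{22}\mbf{v}_2=\mbf{h}_2$ and substituting~\eqref{eq:Gijkl} recovers the $S_{ii}$ and $L_{ij}$ updates (essentially the matrix-level derivation in Section~\ref{sec:iterL}); a row of $\mbf{L}\mbf{S}\mbf{w}=\mbf{b}$ with $L_{ii}=1$ recovers the $w_i$ update; a row of $\mbf{L}^T\mbf{x}=\mbf{w}$ recovers the $x_i$ update; and a row of $\mbf{C}_{22}\mbf{y}_2=\mbf{d}_2$ combined with~\eqref{eq:Cijkl}, \eqref{eq:dij}, and Theorem~\ref{thm:ultimateorder} recovers the $y_{ij}$ update. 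Existence and uniqueness of the fixed point then follow because $\mbf{A}>0$ forces $\mbf{S}$ to have strictly positive diagonal, so each of the four coefficient matrices is triangular with nonzero diagonal and hence invertible.

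For piece (2) I would prove two short lemmas. \emph{Monotonicity:} once any variable equals its target, a further synchronous sweep leaves it at that target, because each right-hand side of~\eqref{eq:iterupdate} evaluated at the target values of its inputs returns the target value of its output. \emph{Progress:} the combined dependency graph of the five updates is a DAG---$S_{ii}$ and $L_{ij}$ depend only on smaller-index $L,S$ entries (plus $S_{jj}$ for $L_{ij}$); $w_i$ on $w_{<i}$ and on $L,S$; $x_i$ on $x_{>i}$ and on $L$; $y_{ij}$ on $y_{i\ell}$ for $\ell>j$, $y_{ki}$ for $k>i$, and on $L,S$---so among the not-yet-correct variables there is always a source whose inputs are all already correct, and that source acquires its correct value in the current sweep. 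The same two lemmas cover the asynchronous claim, under the standard fairness assumption that every variable is updated infinitely often. For piece (3) the scheme tracks $|\mathcal{L}|-N$ strictly lower entries of $\mbf{L}$, $N$ diagonal entries of $\mbf{S}$, $N$ entries of $\mbf{w}$, $N$ entries of $\mbf{x}$, and $|\mathcal{L}|$ entries of $\mbf{y}$, for a grand total of $2(|\mathcal{L}|+N)$; by progress each sweep retires at least one from the not-yet-correct set, giving the iteration bound.

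The main obstacle I expect is the progress lemma for the \emph{coupled} factorization updates, because the rules for $S_{ii}$ and $L_{ij}$ are nonlinear and cross-refer (the $L_{ij}$ update divides by the current $S_{jj}$, and the $S_{ii}$ update is quadratic in $L$). To handle this cleanly I would lay out an explicit topological order on $\{S_{ii}\}\cup\{L_{ij}:ij\in\mathcal{L},\,i>j\}$---alternating by column as $S_{11},L_{\cdot 1},S_{22},L_{\cdot 2},\ldots,S_{NN}$---and verify by induction on this order, using the initialization $\mbf{L}=\mbf{1},\,\mbf{S}=\mbox{diag}(\mbf{A})$ from Section~\ref{sec:iterL}, that each entry in turn becomes and then stays correct. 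Extending the DAG/source argument to the linear updates for $\mbf{w}$, $\mbf{x}$, and $\mbf{y}$ is then routine.
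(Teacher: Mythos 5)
Your proposal is correct and follows essentially the same route as the paper: the paper's proof likewise argues that each update in~\eqref{eq:iterupdate} is a rearranged row of one of the triangular systems in~\eqref{eq:global}, that values therefore ``lock in'' one at a time along the forward/backward substitution order, and counts $|\mathcal{L}|$ sweeps for $\mbf{S},\mbf{L}$, then $N$ for $\mbf{w}$, $|\mathcal{L}|$ for $\mbf{y}$, and $N$ for $\mbf{x}$ to reach $2(|\mathcal{L}|+N)$. Your fixed-point/monotonicity/progress decomposition and the explicit column-wise topological order on $\{S_{ii}\}\cup\{L_{ij}\}$ merely make rigorous what the paper states informally.
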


\begin{proof}
The theorem follows directly from the fact that~\eqref{eq:iterupdate} is exactly solving~\eqref{eq:global} using forward and backward substitutions.  We have previously gone to great lengths to show that all the systems in~\eqref{eq:global} have either upper- or lower-triangular coefficient matrices.  While our iterative scheme is doing extra calculations, in the background it is also solving the global systems through forward and backward substitutions.  For example, in the case of solving for $\mbf{S}$ and $\mbf{L}$, after one update of $S_{11}$, it will be fixed and will no longer change.  After a subsequent update of $L_{21}$ it will be fixed and no longer change, then after a subsequent update of $S_{22}$ it will be fixed and will no longer change, and so on.  The $\mbf{S}$, $\mbf{L}$, and $\mbf{w}$ quantities all get incrementally locked in through a forward substitution, after which $\mbf{x}$ and $\mbf{y}$ get locked in through backward substitution.  As long as every variable gets updated infinitely often (even asynchronously in random order) we can guarantee convergence (at least probabilistically).  In the case of synchronous updates, the $\mbf{S}$ and $\mbf{L}$ quantities will be locked in after $|\mathcal{L}|$ iterations, the $\mbf{w}$ quantities after $N$ subsequent iterations (or sooner since happening in parallel with $\mbf{S}$ and $\mbf{L}$), the $\mbf{y}$ quantities after $|\mathcal{L}|$ more iterations, and the $\mbf{x}$ quantities after $N$ more iterations (or sooner since happening in parallel with $\mbf{y}$).  We can thus claim that the overall synchronous algorithm will converge in $2(|\mathcal{L}| +N)$ iterations; in practice the number will be lower as $\mbf{w}$ does not have to wait for $\mbf{S}$ and $\mbf{L}$ to converge and $\mbf{x}$ does not have to wait for $\mbf{y}$ to converge.
\end{proof}

Having a guarantee that our local updates in~\eqref{eq:iterupdate} converge to the global solution is quite advantageous in the case that parallel calculations can be performed.  The proof discusses how the global triangular systems are being solved `in the background' through forward/backward substitutions even if the local agents perform their updates in any order.   This comes at the cost of performing many `rough-draft' calculations that eventually get overwritten by the final answer as variables lock in their values.  In practice, these rough drafts may actually be quite close to the final answers and usable by consumers of the solutions.  As such, the next section details how to implement the updates as a local message-passing scheme.

\begin{figure}[t]
\includegraphics[width=0.95\textwidth]{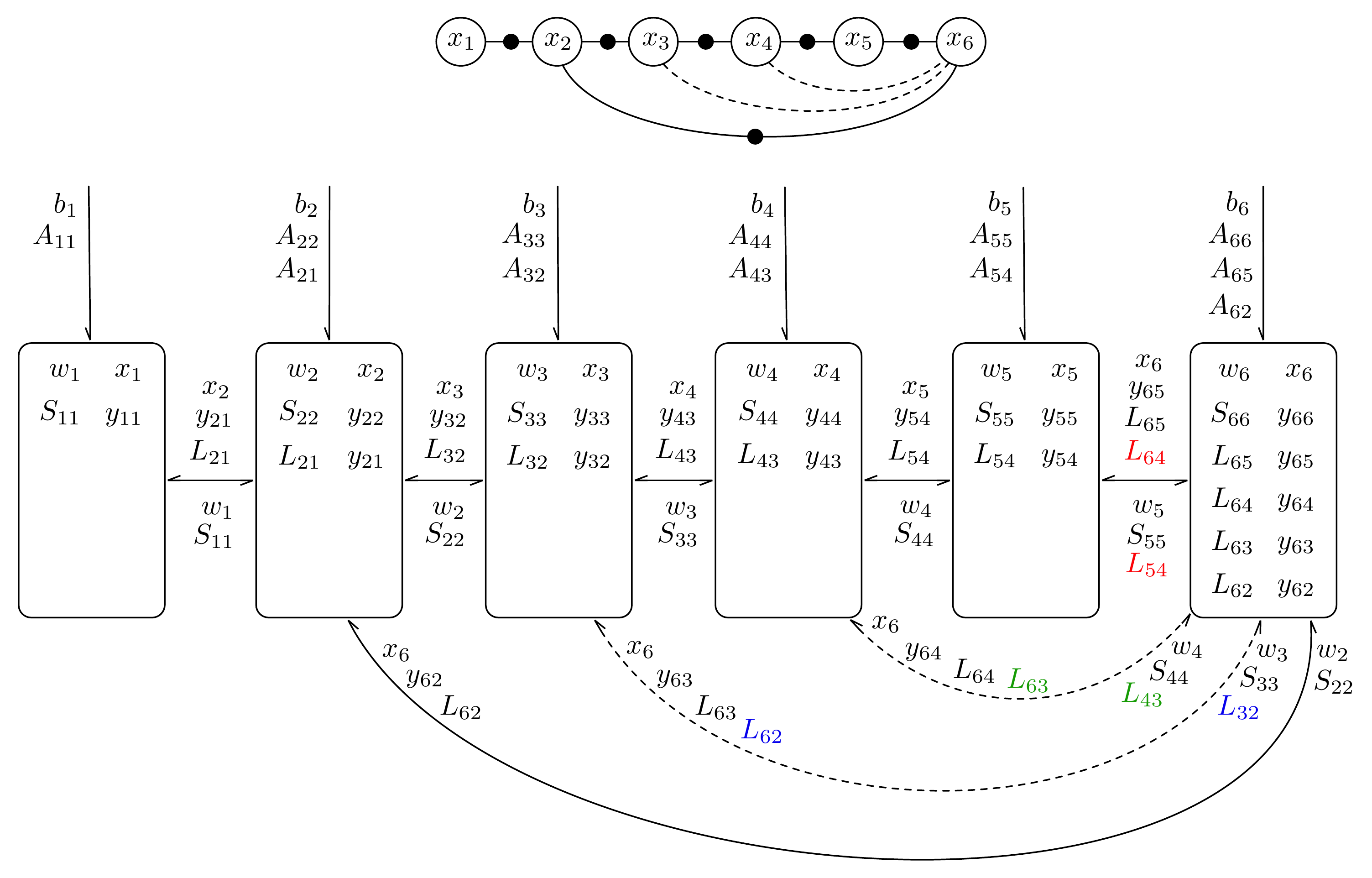}
\centering
\vspace*{-0.1in}
\caption{Example loopy factor graph showing $6$ variables and the local message-passing scheme from~\eqref{eq:iterupdate} that can be used to exactly solve both the primary and secondary problems.  Rounded rectangles are `agents' that look after the variables stored inside them.  Messages are exchanged along communication channels (solid channels derive from shared factors; dashed from extra channels based on the `four corners of a box' rule).  The messages in black are `standard' messages that follow a simple pattern based on whether the communication is to the left or right.  The additional red, green, and blue messages come in pairs; the pattern for their construction is if agents $i$ and $j$ are both in direct contact with agent $k$ to their lefts, these extra messages are exchanged.}
\label{fig:example2}
\end{figure}

\begin{figure}[t]
\includegraphics[width=\textwidth]{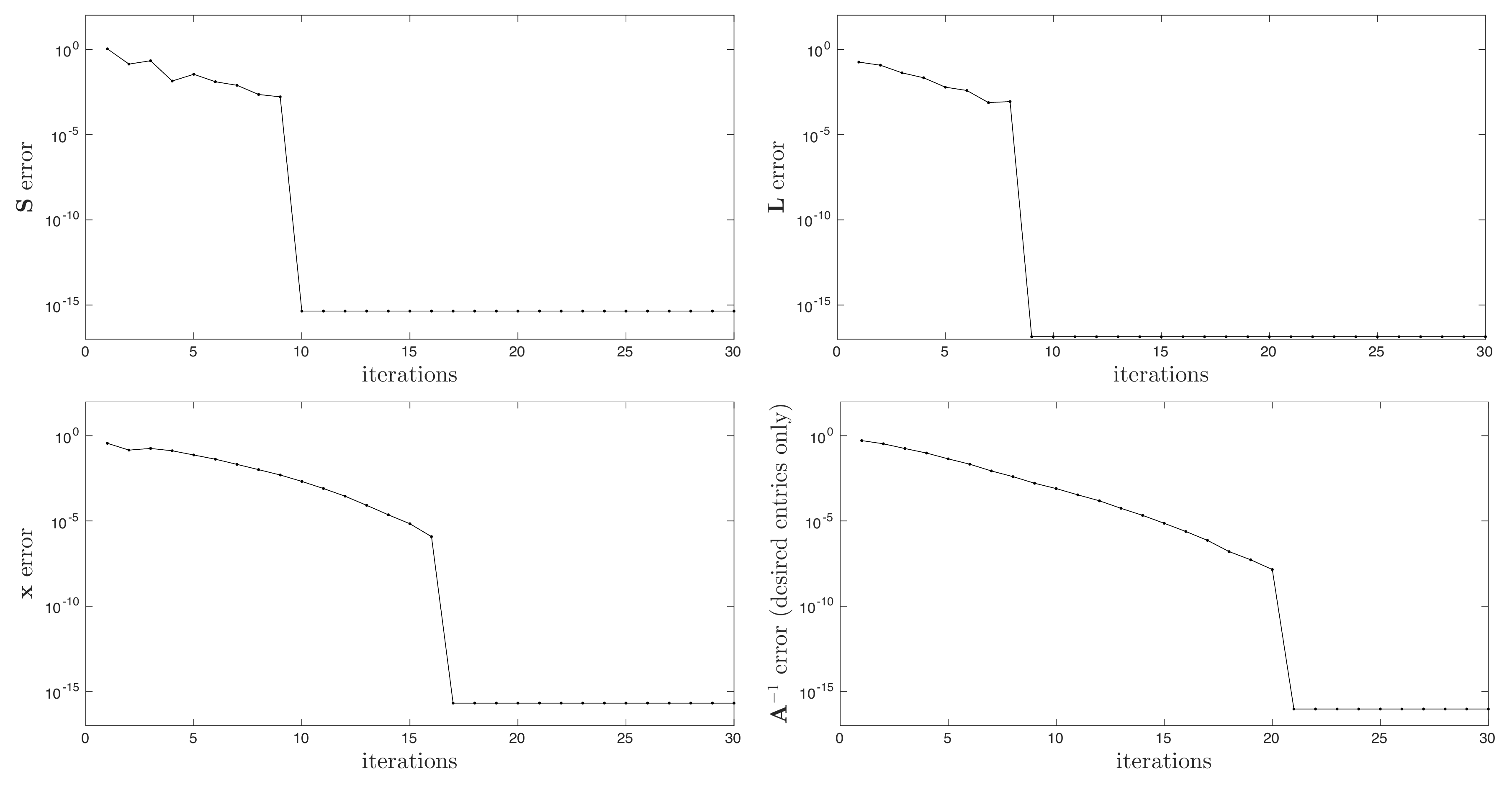}
\centering
\caption{Convergence of the norm of the errors of $\mbf{S}$, $\mbf{L}$, $\mbf{x}$, and $\mbf{A}^{-1}$ (desired entries only) for the example in 
Figure~\ref{fig:example2} using the local message passing scheme from~\eqref{eq:iterupdate}.  We see that all quantities converge to machine zero by about $20$ iterations. The sharp dropoffs in errors correspond to the forward or backward pass completing for that quantity.  However, even before this point the errors are reasonably small and could be usable by consumers of the solutions.}
\label{fig:example2_errors}
\end{figure}

\subsection{Parallel Calculation Using Message-Passing Agents}
 
 To understand how we can turn~\eqref{eq:iterupdate} into a local message-passing scheme, it is instructive to consider the toy example in Figure~\ref{fig:example2}.  In the rounded rectangles, we assemble $N$ groups of variables; we can think of these as individual agents that carry out their own computations in parallel while exchanging the messages shown along communication channels.  The example shows that the only communication channels needed are the ones between agents that are connected directly by a factor in the factor graph plus some extra channels that derive from the `four corners of a box' rule discussed earlier.  Appendix~\ref{app:updates} provides the detailed updates each agent must undertake.  Figure~\ref{fig:example2_errors} shows the convergence of the errors in $\mbf{S}$, $\mbf{L}$, $\mbf{x}$, and $\mbf{A}^{-1}$ (desired entries only) for this example.    Notably, by about $20$ iterations both the primary and secondary problem errors converge to machine zero.  However, even after a smaller number of iterations the errors are still quite small and hence the quantities could be used by consumers in their rough-draft state.

From the example, we can now try to generalize.  There are four issues that must be overcome to claim that we have a truly local scheme:
\begin{enumerate}
\item establishing agent order
\item communication graph discovery
\item calculation and storage responsibilities
\item message-exchange handshaking
\end{enumerate}
We will discuss a solution to each of these issues in turn.

\subsubsection{Establishing Agent Order}

We will see that all of these steps require establishing an {\em order} to the agents, which amounts to choosing the primary variable order in the global system.  We can think of this as a symmetry-breaking mechanism that allows the agents to decide how to handle loops in the factor graph.  There are a few options to do this:
\begin{enumerate}
\item we could use an oracle to make the selection
\item we could use an external signal such as a clock to order agents based on the time they join the collective
\item we could try to avoid all external influence and negotiate the order locally; for example, each agent could self-generate a large random integer (collisions could only be guaranteed probabilistically)
\end{enumerate}
In all cases, we assume the result of establishing the order is that all agents possess a unique integer, $i$.  The resulting order may not be the optimal one in terms of minimizing fill-in of the underlying $\mbf{L}$ matrix, but as shown in the example of Figure~\ref{fig:chain_fillin}, even a random order is highly unlikely to result in catastrophic fill-in.  Without loss of generality, we can assume the assigned integers are $1 \ldots N$ since we will only require agents to compare their integers with other agents to determine their actions.

\subsubsection{Communication Graph Discovery}

The communication graph (i.e., the local message-passing links between agents) can be constructed using two simple rules:
\begin{enumerate}
\item if agents $i$ and $j$ are both involved in the same factor in the factor graph, then a communication link between agents $i$ and $j$ is established
\item if agent $i$ is in communication with both agents $j > i$ and $k > i$, then agent $i$ tells agents $j$ and $k$ to establish a link 
\end{enumerate}
These rules must be applied recursively and result in a communication graph that corresponds to the fill-in pattern of $\mbf{L}$ since the second rule is the same as idea as the `four corners of a box' concept discussed in Lemma~\ref{lem:box}.

\subsubsection{Calculation and Storage Responsibilities}

Each agent $i$ needs to determine for which variables it has responsibility to store and calculate.  Each agent $i$ can do this locally using the following two rules:
\begin{enumerate}
\item agent $i$ takes responsible for $S_{ii}$, $w_i$, $x_i$, and $y_{ii}$
\item if agent $i$ is in communication with agent $j < i$, then agent $i$ takes responsibility for $L_{ij}$, $y_{ij}$
\end{enumerate}
Once the communication graph is established, this scheme fully assigns responsibility for every variable in $\mbf{S}$, $\mbf{L}$, $\mbf{w}$, $\mbf{x}$, and $\mbf{y}$ to only one agent.

\subsubsection{Message-Exchange Handshaking}

The goal of message-exchange handshaking is to establish which messages are to be exchanged along each communication link.  These can also be determined locally by agent $i$ using the following rules:
\begin{enumerate}
\item if agent $i$ is in communication with agent $j > i$, then agent $i$ pushes $S_{ii}$ and $w_i$ to agent $j$
\item if agent $i$ is in communication with agent $j < i$, then agent $i$ pushes $L_{ij}$, $y_{ij}$, and $x_i$ to agent $j$
\item if agent $i$ is in communication with agent $j < i$ and $k < j$, then agent $i$ pushes $L_{ik}$ to agent $j$ and pulls $L_{jk}$ from agent $j$
\end{enumerate}
These rules ensure each agent has access to all the required quantities to perform the updates of the variables for which it is responsible.

\medskip
Given these simple rules, a communication graph can be established, responsibilities for storage and calculation assigned, and message protocols established in a completely local manner while also ensuring both the primary and secondary problems are solved exactly.

\section{Related Work}
\label{sec:relatedwork}

Our analysis of the result of \citet{takahashi73} is new but the end result is mostly the same.  The use of elimination and duplication matrices was motivated by the work of \citet{magnus19}, who have been exploring their uses since the late 1970s \citep{magnus79,magnus80}; they have also used these tools to represent the upper- or lower-half of a covariance or inverse covariance matrix.  To the best of our knowledge, the result of Takahashi et al. has not be presented using Kronecker algebra and these tools, although the fit seems quite natural.  One key aspect our presentation affords is that once the linear system in~\eqref{eq:utvoss} is built, the upper-triangular nature of the $\mbf{C}_{22}$ matrix provides a clear order for the calculation of the desired entries of $\mbf{A}^{-1}$ through backward substitution.  Casting the triangular factorization into the same Kronecker form is also quite useful as it cleanly leads to the iterative scheme that enables the local message-passing solution.

Within robotics and computer vision, \citet[App. B.4]{triggs00} and \citet{kaess09} discuss methods to calculate specific blocks of the covariance matrix efficiently from the inverse covariance for computer vision and robotics applications, but do not discuss doing so for the complete set of covariance blocks corresponding to the non-zero blocks of the inverse covariance matrix.  \citet{mahon08} appear to be the first to use the Takahashi et al. approach to extract covariance information in robotics for a visual \ac{SLAM} problem.  \citet{barfoot_ijrr20} later show how to use the Takaashi et al. result within an iterative variational inference framework, as mentioned in the introduction of the current paper.  None of these works looks at how a local message-passing scheme could be used to solve both the primary and secondary problems.

{\em Belief propagation} (also known as sum-product message passing) was introduced by \citet{pearle88} to perform inference on graphical models;  \citet{koller09} provides a more recent exposition.  \ac{GaBP} restricts the beliefs involved to Gaussians and is the common alternative to the message-passing scheme we discuss in this paper.  \citet{davison19} recently proposed the use of \ac{GaBP} to solve large-scale localization and mapping problems in a decentralized manner, renewing interest in the method.  Belief propagation is known to converge to the correct global solution to the inference problem when the graphical model is a tree; however, {\em loopy belief propagation} may or may not converge and various sufficient conditions have been established for convergence.  \citet{weiss00} provide the starting point by showing that \ac{GaBP} will converge (mean only not covariance) under the condition that $\mbf{A}$ is diagonally dominant.  To our knowledge, no condition has been derived for \ac{GaBP} to ensure the covariance quantities will converge to the global solution in the case of a loopy graph.  Some of the analyses of \ac{GaBP} look to view the method as an iterative solver for a linear system of equations (e.g., Jacobi's algorithm), a connection first pointed out by \citet{weiss00}. However,  as there have been approximations made, \ac{GaBP} cannot be shown in general to be solving the primary (and secondary) linear systems discussed in this paper.  Our paper in a sense follows this connection in the opposite direction by starting with the known result of \citet{takahashi73} and then showing what it would take to implement this as a local message-passing scheme.  While we can guarantee convergence in both the mean and covariance quantities, this comes at the cost of additional communication links, memory, and computation compared to \ac{GaBP}.  There have been other attempts to augment \ac{GaBP} to ensure convergence \citep{plarre00,du17,marelli20} as well as discussion of general convergent message-passing algorithms \citep{hazan12,meltzer12,thomas19}; however, the connection is not made to Takahashi et al.'s classic result for the Gaussian case.  \citet{bickson09} provides a good overview of \ac{GaBP} and discusses how it can be used as a standard linear algebra solver \citep{shental08}.

More generally, {\em selected inversion} considers solving for specific entries of $\mbf{A}^{-1}$ even when it is not a covariance matrix, although then some of the structure of the problem no longer applies.  \citet{rouet09} and more recently \citet{verbosio19} provide nice overviews of different methods used to solve this problem.  \citet{jacquelin18} provides a method that parallelizes the left-looking selected inversion algorithm on a shared memory system.  \citet{campbell95} combine the Takahashi et al. result with multifrontal methods to solve the selected inversion problem in a serial manner very efficiently.  \citet{lin11} also seek to exploit Takahashi et al. to rapidly invert sparse matrices using a relative index array in an algorithm called {\em SelInv}.  \citet{siden18} discuss a fast Rao-Blackwellized Monte Carlo sampling-based method to find approximations to the covariance matrix when its inverse is sparse.  \citet{betancourt86} extend the classic Takahashi et al. result by showing how to the compute the full inverse in the same amount of time as the sparse inverse through the use of parallel operations.  

\section{Conclusion and Future Work}
\label{sec:conclusion}

We have provided motivation for and a clear statement of the \ac{FLAPOGI} problem, which is of course not new.  This problem is important in Gaussian estimation even when the full Bayesian posterior is not Gaussian (e.g., in the \ac{ESGVI} framework).  Our novel contributions are twofold.  First, we provide a novel presentation of the \citet{takahashi73} solution to the problem.  Second, we work out the details necessary to implement the method using a purely local message-passing scheme between a collection of agents computing in parallel.  The result is that we can guarantee convergence on both the primary (mean) and secondary (covariance) problems, even in the presence of loops in the associated factor graph, in a finite number of iterations for synchronous updates.

There are several avenues that could be explored in future work.  Foremost, we would like to implement the proposed local message-passing scheme on a large real-world inference problem to evaluate its performance; here our aim was merely to lay the foundation.  Also, we avoided discussion of how to best select the primary variable order, which influences the amount of fill-in in the global matrices and the local message-passing communication graph; we believe it may be possible to combine the result of \citet{takahashi73} with the modern solvers used for inference in robotics and computer vision \citep{kaess08,kaess11} to this aim.  It would also be interesting to look more deeply at the connection to Gaussian belief propagation; the methods perform the same on a tree graph, but what approximations are needed in the loopy case for our message-passing scheme to be similar/equivalent to \ac{GaBP} (e.g., perhaps we simply need to delete the extra communication channels our method requires)?  We hope the ideas presented herein provide a starting point for further investigation.

\bibliographystyle{asrl}
\bibliography{refs,refs2,book,pubs}

\begin{thebibliography}{41}
\newcommand{\enquote}[1]{``#1''}
\providecommand{\natexlab}[1]{#1}

\bibitem[{Amari(1998)}]{amari98}
Amari, S.-I., \enquote{Natural gradient works efficiently in learning,}
  \emph{Neural computation}, 10(2):251--276, 1998.

\bibitem[{Amari(2016)}]{Amari2016}
Amari, S.-I., \emph{Information Geometry and Its Applications}, Springer,
  Japan, 2016.

\bibitem[{Barfoot(2020)}]{barfoot_arxiv20}
Barfoot, T.~D., \enquote{Multivariate Gaussian Variational Inference by Natural
  Gradient Descent,} Technical report, Autonomous Space Robotics Lab,
  University of Toronto, 2020,
  \href{https://arxiv.org/abs/2001.10025}{arXiv:2001.10025 [stat.ML]}.

\bibitem[{Barfoot and D'Eleuterio(2020)}]{barfoot_amai20}
Barfoot, T.~D. and D'Eleuterio, G. M.~T., \enquote{Variational Inference as
  Iterative Projection in a Bayesian Hilbert Space,} 2020, submitted to the
  {\em Annals of Mathematics and Artificial Intelligence} on May 14, 2020.
  Manuscript \# AMAI-D-20-00096.
  (\href{http://arxiv.org/abs/2005.07275}{arXiv:2005.07275 [cs.LG]}).

\bibitem[{Barfoot et~al.(2020)Barfoot, Forbes, and Yoon}]{barfoot_ijrr20}
Barfoot, T.~D., Forbes, J.~R., and Yoon, D.~J., \enquote{Exactly Sparse
  Gaussian Variational Inference with Application to Derivative-Free Batch
  Nonlinear State Estimation,} \emph{International Journal of Robotics Research
  (IJRR)}, 2020, (\href{http://arxiv.org/abs/1911.08333}{arXiv:1911.08333
  [cs.RO]}).

\bibitem[{Betancourt and Alvarado(1986)}]{betancourt86}
Betancourt, R. and Alvarado, F.~L., \enquote{Parallel Inversion of Sparse
  Matrices.} \emph{IEEE transactions on power systems}, 1(1):74--81, 1986.

\bibitem[{Bickson(2009)}]{bickson09}
Bickson, D., \emph{Gaussian Belief Propagation: Theory and Application}, Ph.D.
  thesis, Hebrew University of Jerusalem, 2009.

\bibitem[{Bishop(2006)}]{bishop06}
Bishop, C.~M., \emph{Pattern Recognition and Machine Learning}, Springer, 2006.

\bibitem[{Brown(1958)}]{brown58}
Brown, D.~C., \enquote{A Solution to the General Problem of Multiple Station
  Analytical Stereotriangulation,} {RCA-MTP Data Reduction Technical Report No.
  43 (or AFMTC TR 58-8)}, {Patrick Airforce Base}, Florida, 1958.

\bibitem[{Campbell and Davis(1995)}]{campbell95}
Campbell, Y.~E. and Davis, T.~A., \enquote{Computing the sparse inverse subset:
  an inverse multifrontal approach,} \emph{University of Florida, Technical
  Report TR-95-021}, 1995.

\bibitem[{Davison and Ortiz(2019)}]{davison19}
Davison, A.~J. and Ortiz, J., \enquote{FutureMapping 2: Gaussian Belief
  Propagation for Spatial AI,} 2019.

\bibitem[{Du et~al.(2017)Du, Ma, Wu, Kar, and Moura}]{du17}
Du, J., Ma, S., Wu, Y.-C., Kar, S., and Moura, J.~M., \enquote{Convergence
  analysis of distributed inference with vector-valued Gaussian belief
  propagation,} \emph{The Journal of Machine Learning Research},
  18(1):6302--6339, 2017.

\bibitem[{Durrant-Whyte and Bailey(2006)}]{durrantwhyte06a}
Durrant-Whyte, H. and Bailey, T., \enquote{Simultaneous Localisation and
  Mapping ({SLAM}): Part {I} The Essential Algorithms,} \emph{{IEEE} Robotics
  and Automation Magazine}, 11(3):99--110, 2006.

\bibitem[{Erisman and Tinney(1975)}]{erisman75}
Erisman, A.~M. and Tinney, W.~F., \enquote{On Computing Certain Elements of the
  Inverse of a Sparse Matrix,} \emph{Commununications of the ACM},
  18(3):177--179, 1975.

\bibitem[{Hazan and Shashua(2012)}]{hazan12}
Hazan, T. and Shashua, A., \enquote{Convergent message-passing algorithms for
  inference over general graphs with convex free energies,} \emph{arXiv
  preprint arXiv:1206.3262}, 2012.

\bibitem[{Jacquelin et~al.(2018)Jacquelin, Lin, Jia, Zhao, and
  Yang}]{jacquelin18}
Jacquelin, M., Lin, L., Jia, W., Zhao, Y., and Yang, C., \enquote{A
  left-looking selected inversion algorithm and task parallelism on shared
  memory systems,} in \emph{Proceedings of the International Conference on High
  Performance Computing in Asia-Pacific Region}, pages 54--63, 2018.

\bibitem[{Jordan et~al.(1999)Jordan, Ghahramani, Jaakkola, and Saul}]{jordan99}
Jordan, M.~I., Ghahramani, Z., Jaakkola, T., and Saul, L.~K., \enquote{An
  Introduction to Variational Methods for Graphical Models,} \emph{Machine
  Learning}, 37:183--233, 1999.

\bibitem[{Kaess and Dellaert(2009)}]{kaess09}
Kaess, M. and Dellaert, F., \enquote{Covariance recovery from a square root
  information matrix for data association,} \emph{Robotics and autonomous
  systems}, 57(12):1198--1210, 2009.

\bibitem[{Kaess et~al.(2011)Kaess, Johannsson, Roberts, Ila, Leonard, and
  Dellaert}]{kaess11}
Kaess, M., Johannsson, H., Roberts, R., Ila, V., Leonard, J., and Dellaert, F.,
  \enquote{iSAM2: Incremental smoothing and mapping with fluid relinearization
  and incremental variable reordering,} in \emph{2011 IEEE International
  Conference on Robotics and Automation}, pages 3281--3288, 2011.

\bibitem[{Kaess et~al.(2008)Kaess, Ranganathan, and Dellaert}]{kaess08}
Kaess, M., Ranganathan, A., and Dellaert, F., \enquote{iSAM: Incremental
  smoothing and mapping,} \emph{IEEE Transactions on Robotics},
  24(6):1365--1378, 2008.

\bibitem[{Koller and Friedman(2009)}]{koller09}
Koller, D. and Friedman, N., \emph{Probabilistic Graphical Models: Principles
  and Techniques}, MIT Press, 2009.

\bibitem[{Kullback and Leibler(1951)}]{kullback51}
Kullback, S. and Leibler, R.~A., \enquote{On information and sufficiency,}
  \emph{The annals of mathematical statistics}, 22(1):79--86, 1951.

\bibitem[{Lin et~al.(2011)Lin, Yang, Meza, Lu, Ying, and E}]{lin11}
Lin, L., Yang, C., Meza, J.~C., Lu, J., Ying, L., and E, W.,
  \enquote{SelInv---An Algorithm for Selected Inversion of a Sparse Symmetric
  Matrix,} \emph{ACM Transactions on Mathematical Software (TOMS)},
  37(4):1--19, 2011.

\bibitem[{Magnus and Neudecker(1980)}]{magnus80}
Magnus, J.~R. and Neudecker, H., \enquote{The elimination matrix: some lemmas
  and applications,} \emph{SIAM Journal on Algebraic Discrete Methods},
  1(4):422--449, 1980.

\bibitem[{Magnus and Neudecker(2019)}]{magnus19}
Magnus, J.~R. and Neudecker, H., \emph{Matrix differential calculus with
  applications in statistics and econometrics}, John Wiley \& Sons, 2019.

\bibitem[{Magnus et~al.(1979)Magnus, Neudecker et~al.}]{magnus79}
Magnus, J.~R., Neudecker, H., et~al., \enquote{The commutation matrix: some
  properties and applications,} \emph{The Annals of Statistics}, 7(2):381--394,
  1979.

\bibitem[{Mahon et~al.(2008)Mahon, Williams, Pizarro, and
  Johnson-Roberson}]{mahon08}
Mahon, I., Williams, S.~B., Pizarro, O., and Johnson-Roberson, M.,
  \enquote{Efficient view-based SLAM using visual loop closures,} \emph{IEEE
  Transactions on Robotics}, 24(5):1002--1014, 2008.

\bibitem[{Marelli et~al.(2020)Marelli, Sui, Fu, and Sun}]{marelli20}
Marelli, D., Sui, T., Fu, M., and Sun, X., \enquote{Convergence and Accuracy
  Analysis for A Distributed Static State Estimator based on Gaussian Belief
  Propagation,} \emph{arXiv preprint arXiv:2004.01969}, 2020.

\bibitem[{Meltzer et~al.(2012)Meltzer, Globerson, and Weiss}]{meltzer12}
Meltzer, T., Globerson, A., and Weiss, Y., \enquote{Convergent message passing
  algorithms-a unifying view,} \emph{arXiv preprint arXiv:1205.2625}, 2012.

\bibitem[{Opper and Archambeau(2009)}]{opper09}
Opper, M. and Archambeau, C., \enquote{The variational Gaussian approximation
  revisited,} \emph{Neural computation}, 21(3):786--792, 2009.

\bibitem[{Pearle(1988)}]{pearle88}
Pearle, J., \enquote{Probabilistic reasoning in intelligent systems,} 1988.

\bibitem[{Plarre and Kumar(2004)}]{plarre00}
Plarre, K.~H. and Kumar, P., \enquote{Extended message passing algorithm for
  inference in loopy Gaussian graphical models,} \emph{Ad Hoc Networks},
  2(2):153--169, 2004.

\bibitem[{Rouet(2009)}]{rouet09}
Rouet, F.~H., \enquote{Partial computation of the inverse of a large sparse
  matrix-application to astrophysics,} Technical report, INP-ENSEEEIHT/IRIT,
  http://rouet. perso. enseeiht. fr/report. pdf, 2009.

\bibitem[{Shental et~al.(2008)Shental, Siegel, Wolf, Bickson, and
  Dolev}]{shental08}
Shental, O., Siegel, P.~H., Wolf, J.~K., Bickson, D., and Dolev, D.,
  \enquote{Gaussian belief propagation solver for systems of linear equations,}
  in \emph{2008 IEEE International Symposium on Information Theory}, pages
  1863--1867, IEEE, 2008.

\bibitem[{Sid{\'e}n et~al.(2018)Sid{\'e}n, Lindgren, Bolin, and
  Villani}]{siden18}
Sid{\'e}n, P., Lindgren, F., Bolin, D., and Villani, M., \enquote{Efficient
  covariance approximations for large sparse precision matrices,} \emph{Journal
  of Computational and Graphical Statistics}, 27(4):898--909, 2018.

\bibitem[{Takahashi et~al.(1973)Takahashi, Fagan, and Chen}]{takahashi73}
Takahashi, K., Fagan, J., and Chen, M.-S., \enquote{A Sparse Bus Impedance
  Matrix and its Application to Short Circuit Study,} in \emph{Proceedings of
  the PICA Conference}, 1973.

\bibitem[{Thomas and Slock(2019)}]{thomas19}
Thomas, C.~K. and Slock, D., \enquote{Convergence Analysis of Sparse Bayesian
  Learning Under Approximate Inference Techniques,} in \emph{2019 53rd Asilomar
  Conference on Signals, Systems, and Computers}, pages 764--768, IEEE, 2019.

\bibitem[{Triggs et~al.(2000)Triggs, McLauchlan, Hartley, and
  Fitzgibbon}]{triggs00}
Triggs, W., McLauchlan, P., Hartley, R., and Fitzgibbon, A., \enquote{Bundle
  Adjustment: A Modern Synthesis,} in W.~Triggs, A.~Zisserman, and R.~Szeliski,
  editors, \emph{Vision Algorithms: Theory and Practice}, LNCS, pages 298--375,
  Springer Verlag, 2000.

\bibitem[{Verbosio(2019)}]{verbosio19}
Verbosio, F., \emph{High performance selected inversion methods for sparse
  matrices}, Ph.D. thesis, Universit{\`a} della Svizzera italiana, 2019.

\bibitem[{Weiss and Freeman(2000)}]{weiss00}
Weiss, Y. and Freeman, W.~T., \enquote{Correctness of belief propagation in
  Gaussian graphical models of arbitrary topology,} in \emph{Advances in neural
  information processing systems}, pages 673--679, 2000.

\bibitem[{Yannakakis(1981)}]{yannakakis81}
Yannakakis, M., \enquote{Computing the minimum fill-in is NP-complete,}
  \emph{SIAM Journal on Algebraic Discrete Methods}, 2(1):77--79, 1981.

\end{thebibliography}

\appendix

\section{Kronecker Product and Vectorization}
\label{app:kronvec}

There are several identities of which we make use involving the {\em Kronecker product} $\otimes$ and the {\em vectorization operator} $\vec(\cdot)$ that stacks the columns of a matrix:
\begin{equation}\label{kron:1}
\begin{aligned}
\vec(\mbf{a}) & \equiv  \mbf{a} \\
\vec(\mbf{a}\mbf{b}^T) & \equiv  \mbf{b} \otimes \mbf{a} \\
\vec(\mbf{A}\mbf{B}\mbf{C}) & \equiv  (\mbf{C}^T \otimes \mbf{A} )\, \vec(\mbf{B}) \\
\vec(\mbf{A})^T \vec(\mbf{B}) & \equiv  \tr(\mbf{A}^T\mbf{B}) \\
(\mbf{A} \otimes \mbf{B})(\mbf{C} \otimes \mbf{D}) & \equiv  (\mbf{A}\mbf{C}) \otimes (\mbf{B}\mbf{D}) \\
(\mbf{A} \otimes \mbf{B})^{-1} & \equiv  \mbf{A}^{-1} \otimes \mbf{B}^{-1} \\
 (\mbf{A} \otimes \mbf{B})^{T} & \equiv  \mbf{A}^{T} \otimes \mbf{B}^{T} \\
 \left| \mbf{A} \otimes \mbf{B} \right| & \equiv \left| \mbf{A} \right|^M \left| \mbf{B} \right|^N \qquad (\mbox{$\mbf{A}$ is $N\times N$, $\mbf{B}$ is $M \times M$}) \\
\mbox{tr} \left( \mbf{A} \otimes \mbf{B} \right) & \equiv \mbox{tr} (\mbf{A}) \, \mbox{tr} (\mbf{B})
\end{aligned}
\end{equation}
It is worth noting that $\otimes$ and $\vec(\cdot)$ are linear operators.  

 \section{Elimination and Duplication Matrices}
 \label{app:elimdup}

To handle symmetric and triangular matrices, we follow \citet{magnus19} and define {\em elimination} and {\em duplication} matrices as
\begin{equation}
\mbf{E} = \bbm \vdots \\ \vec(\mbf{1}_{ji})^T \\ \vdots \ebm_{i \geq j}, \quad \mbf{D} = \bbm & \cdots & \vec( (1 - \delta_{ij}) \mbf{1}_{ij} + \mbf{1}_{ji} ) & \cdots & \ebm_{i \geq j},
\end{equation}
where $\delta_{ij}$ is the Kronecker delta and $\mbf{1}_{ij}$ is an $N \times N$ matrix of zeros except for the $ij$ entry, which is one.  The elimination matrix selects only the upper-half entries of a vectorized matrix; it can be used to remove duplicate entries in a vectorized symmetric matrix or to remove the zeros of a vectorized upper-triangular matrix.  The duplication matrix can be used to reconstitute a symmetric matrix from its eliminated version.  Note, we must pick the same variable order, $ij$, for $\mbf{E}$ and $\mbf{D}$.  Other definitions of elimination matrices are possible but this is the one that is useful in our situation.

For example with $N=3$ we can set
\begin{equation}
\mbf{E} = \bbm 1 & 0 & 0 & 0 & 0 & 0 & 0 & 0 & 0 \\
		        0 & 0 & 0 & 1 & 0 & 0 & 0 & 0 & 0 \\
		        0 & 0 & 0 & 0 & 1 & 0 & 0 & 0 & 0 \\
		        0 & 0 & 0 & 0 & 0 & 0 & 1 & 0 & 0 \\
		        0 & 0 & 0 & 0 & 0 & 0 & 0 & 1 & 0 \\
		        0 & 0 & 0 & 0 & 0 & 0 & 0 & 0 & 1 \ebm, \quad
\mbf{D} = \bbm 1 & 0 & 0 & 0 & 0 & 0 \\
			0 & 1 & 0 & 0 & 0 & 0 \\
			0 & 0 & 0 & 1 & 0 & 0 \\
			0 & 1 & 0 & 0 & 0 & 0 \\
			0 & 0 & 1 & 0 & 0 & 0 \\
			0 & 0 & 0 & 0 & 1 & 0 \\
			0 & 0 & 0 & 1 & 0 & 0 \\
			0 & 0 & 0 & 0 & 1 & 0 \\
			0 & 0 & 0 & 0 & 0 & 1 \ebm,
\end{equation}
for a variable order that we refer to as basic triangular.

As mentioned above, the duplication matrix reconstitutes a vectorized symmetric matrix from only its upper-half entries:
\begin{equation}
\mbf{D} \mbf{E} \vec(\mbf{S}) = \vec(\mbf{S}),
\end{equation}
for a symmetric matrix, $\mbf{S}$.    It is also always true that
\begin{equation}
\mbf{E} \mbf{D} = \mbf{1},
\end{equation}
with $\mbf{1}$ the identity matrix.  Premultiplying a vectorized lower-triangular matrix, $\mbf{L}$ by the elimination matrix retains only the diagonal entries:
\begin{equation}
\mbf{E} \vec(\mbf{L}) = \mbf{E} \vec(\mbox{diag}(\mbf{L})).
\end{equation}
For an upper-triangular matrix, $\mbf{U}$, we have the helpful expression,
\begin{equation}
\mbf{E}^T \mbf{E} \vec(\mbf{U}) = \vec(\mbf{U}),
\end{equation}
of which we will make use at times.  We also have that
\begin{equation}
\mbf{E} \left( \mbf{1} \otimes \mbf{L} \right) \mbf{D} \mbf{E} = \mbf{E} \left( \mbf{1} \otimes \mbf{L} \right),
\end{equation}
for a lower-triangular matrix, $\mbf{L}$.  The following inverse expression also holds:
\begin{equation}
\left(\mbf{E} \left( \mbf{1} \otimes \mbf{L} \right) \mbf{D}\right)^{-1}  = \mbf{E} \left( \mbf{1} \otimes \mbf{L}^{-1} \right) \mbf{D},
\end{equation}
where again $\mbf{L}$ is lower triangular.

\section{Detailed Updates for Example in Figure~\ref{fig:example2}}
\label{app:updates}

Below are the detailed local calculations for the example in Figure~\ref{fig:example2}, broken down by agent:

\noindent
{\em Agent 1:}
\beqn{}
S_{11} & \leftarrow & A_{11} \\
w_1 & \leftarrow & b_1 / S_{11}   \\
x_1 & \leftarrow &   w_1 - L_{21} x_2   \\
y_{11} & \leftarrow &  1/S_{11} - L_{21} y_{21}  \hspace*{4.2in}
\eeqn

\medskip
\noindent
{\em Agent 2:}
\beqn{}
S_{22} & \leftarrow & A_{22} - L_{21}^2 S_{11} \\
L_{21} & \leftarrow &  A_{21} / S_{11} \\
w_2 & \leftarrow &  (b_2 - L_{21} S_{11} w_1 ) / S_{22}  \\
x_2 & \leftarrow &   w_2 - L_{32} x_3 - L_{62} x_6  \\
y_{22} & \leftarrow &  1/S_{22} - L_{32} y_{32} - L_{62} y_{62}     \\
y_{21} & \leftarrow &  -L_{21} y_{22} \hspace*{4.6in}
\eeqn

\medskip
\noindent
{\em Agent 3:}
\beqn{}
S_{33} & \leftarrow & A_{33} - L_{32}^2 S_{22} \\
L_{32} & \leftarrow & A_{32} / S_{22} \\
w_3 & \leftarrow &   (b_3 - L_{32} S_{22} w_2 ) / S_{33}  \\
x_3 & \leftarrow &  w_3 - L_{43} x_4 - L_{63} x_6   \\
y_{33} & \leftarrow &  1/S_{33} - L_{43} y_{43} - L_{63} y_{63}     \\
y_{32} & \leftarrow &   -L_{32} y_{33} - L_{62} y_{63} \hspace*{4in}
\eeqn

\medskip
\noindent
{\em Agent 4:}
\beqn{}
S_{44} & \leftarrow & A_{44} - L_{43}^2 S_{33} \\
L_{43} & \leftarrow & A_{43} / S_{33} \\
w_4 & \leftarrow &  (b_4 - L_{43} S_{33} w_3 ) / S_{44}  \\
x_4 & \leftarrow &  w_4 - L_{54} x_5 - L_{64} x_6   \\
y_{44} & \leftarrow &  1/S_{44} - L_{54} y_{54} - L_{64} y_{64}     \\
y_{43} & \leftarrow &   -L_{43} y_{44} - L_{63} y_{64} \hspace*{4in}
\eeqn

\medskip
\noindent
{\em Agent 5:}
\beqn{}
S_{55} & \leftarrow & A_{55} - L_{54}^2 S_{44} \\
L_{54} & \leftarrow & A_{54} / S_{44} \\
w_5 & \leftarrow &  (b_5 - L_{54} S_{44} w_4 ) / S_{55}  \\
x_5 & \leftarrow &  w_5 - L_{65} x_6   \\
y_{55} & \leftarrow &   1/S_{55} - L_{65} y_{65}    \\
y_{54} & \leftarrow &    -L_{54} y_{55} - L_{64} y_{65} \hspace*{4in}
\eeqn

\medskip
\noindent
{\em Agent 6:}
\beqn{}
S_{66} & \leftarrow & A_{66} - L_{65}^2 S_{55} - L_{64}^2 S_{44} - L_{63}^2 S_{33} - L_{62}^2 S_{22} \\
L_{65} & \leftarrow & (A_{65} - L_{64} L_{54} S_{44} ) / S_{55} \\
L_{64} & \leftarrow &  -L_{63} L_{43} S_{33} / S_{44} \\
L_{63} & \leftarrow & -L_{62} L_{32} S_{22} / S_{33} \\
L_{62} & \leftarrow &  A_{62} / S_{22} \\
w_6 & \leftarrow & (b_6 - L_{65} S_{55} w_5 - L_{64} S_{44} w_4 - L_{63} S_{33} w_3 - L_{62} S_{22} w_2 ) / S_{66}  \\
x_6 & \leftarrow &  w_6   \\
y_{66} & \leftarrow &   1/S_{66}    \\
y_{65} & \leftarrow &  -L_{65} y_{66}  \\
y_{64} & \leftarrow &  -L_{64} y_{66} - L_{54} y_{65}  \\
y_{63} & \leftarrow &  -L_{63} y_{66} - L_{43} y_{64}  \\
y_{62} & \leftarrow &  -L_{62} y_{66} - L_{32} y_{63}  \hspace*{4in}
\eeqn

\end{document}